\documentclass{article}

\usepackage[preprint]{neurips_2026}

\usepackage[utf8]{inputenc}
\usepackage[T1]{fontenc}
\usepackage{hyperref,url,booktabs}
\usepackage{amsfonts,amsmath,amssymb,amsthm}
\usepackage{nicefrac,microtype}
\usepackage{xcolor}
\usepackage[most,skins]{tcolorbox}
\usepackage{graphicx,wrapfig}
\usepackage{adjustbox}
\usepackage{bm}
\usepackage{mdframed}
\usepackage{nameref}
\usepackage{pifont}
\usepackage{tabularx}
\usepackage{threeparttable}
\usepackage{placeins}

\usepackage[normalem]{ulem}
\usepackage{todonotes}

\hypersetup{colorlinks=true,linkcolor=red!85!black,citecolor=purple,urlcolor=cyan}
\setcitestyle{round}

\newtheorem{lemma}{Lemma}

\newtheorem{claim}{Claim}
\newtheorem{example}{Example}

\surroundwithmdframed[linewidth=0.8pt, innertopmargin=8pt, innerbottommargin=8pt]{mainproblem}

\def\R{\mathbb{R}} 
\def\E{\mathbb{E}}
\def\N{\mathcal{N}}
\def\x{{\textcolor{blue!70!green}{\mathbf{x}}}}
\def\y{\mathbf{y}}
\def\v{\mathbf{v}}
\def\u{\mathbf{u}}

\def\z{\mathbf{z}}
\newcommand{\stderr}[1]{{\scriptsize\textcolor{gray}{$\,\pm #1$}}}
\newcommand{\calF}{\mathcal{F}}
\newcommand{\calP}{\mathcal{P}}

\newcommand{\calD}{\mathcal{D}}
\newcommand{\calL}{\mathcal{L}}
\newcommand{\calR}{\mathcal{R}}
\newcommand{\calT}{\mathcal{T}}

\newcommand{\dFdrt}{\frac{\delta \calF}{\delta \rho_t}}
\newcommand{\dFdrtt}{\frac{\delta \calF^{\theta}}{\delta \rho_t^{\theta}}}

\DeclareMathOperator*{\argmin}{arg\,min}

\newcommand{\supp}{\operatorname{supp}}
\newcommand{\pot}{V}
\newcommand{\pote}{U}
\newcommand{\inter}{W}
\newcommand{\KL}{\operatorname{KL}}
\newcommand{\Div}{\operatorname{div}_\x}
\newcommand{\kernel}{\phi}
\newcommand{\wt}{w^{\theta}}

\title{Wasserstein Residuals: Learning Gradient Flows from Population Dynamics }

\author{%
  \textbf{Markus Heinonen} $^{1,2}$\thanks{Correspondence to \texttt{markus@basis.ai}} \quad
  \textbf{Yair Shenfeld} $^{1,3}$ \quad
  \textbf{Ricardo Baptista} $^{1,4}$ \\
  \textbf{Daniel Waxman} $^{1,5}$ \quad
  \textbf{Dmitry Batenkov} $^{1}$ \quad
  \textbf{Tim Cooijmans} $^{1}$ \quad
  \textbf{Eli Bingham} $^{1}$ \\[6pt]
  \normalfont
  $^{1}$ Basis Research Institute \quad
  $^{2}$ Aalto University \quad
  $^{3}$ Brown University \\
  $^{4}$ University of Toronto \quad
  $^{5}$ MIT \quad
}

\begin{document}

\maketitle 

\begin{abstract}
Reconstructing population dynamics is a central problem in the physical and data sciences. Often, the dynamics are modeled as a Wasserstein gradient flow (WGF): a curve of distributions driven by an energy functional. Though there are multiple mathematical characterizations of a WGF, the dominant algorithmic approach relies on the Jordan--Kinderlehrer--Otto (JKO) scheme. JKO-based methods are inflexible to time discretisation and require solving costly optimal transport problems. We take a \emph{residual} approach, enforcing the continuity equations via a non-negative loss function whose minimum is the WGF. Combined with a data-fitting divergence, this gives a single global objective. This perspective unifies several existing methods and leads to a new particle-based method, \emph{stitching}, that is simulation-free and robust to large gaps between observations. We demonstrate that the stitching method achieves state-of-the-art performance across trajectory inference benchmarks. For code see \url{github.com/BasisResearch/wasserstein-residuals}. 
\end{abstract}

\section{Introduction}
\label{sec:intro}

Reconstructing how a population evolves from a handful of snapshots is a central challenge across many scientific fields, ranging from computational biology \citep{schiebinger2019optimal} to crowd dynamics \citep{maury2010macroscopic}. A common language for such problems is that of \emph{Wasserstein gradient flows}: curves of probability measures $t\mapsto \rho_t$ driven by the steepest descent of an energy functional $\calF$ \citep{ambrosio2005gradient,santambrogio2015optimal}. We study the inverse problem: recovering the energy $\calF$ from observed snapshots so that its gradient flow $\rho:=(\rho_t)$ fits the snapshots $\{q_t\}$ for a collection of observation times $t\in\calT_{\mathrm{obs}}$. 


The dominant line of work \citep{bunne2022proximal,terpin2024learning,persiianov2025ijkonet} on this problem learns the functional $\calF$ by relying on the Jordan--Kinderlehrer--Otto (JKO) definition of Wasserstein gradient flows \citep{jordan1998variational}. This requires solving multiple optimal transport (OT) problems, which can be costly, and lead to inaccuracies under long temporal gaps; see \autoref{fig:wavy_valley}. 

We take a different perspective: a residual loss that vanishes exactly when $\rho$ is a gradient flow of $\calF$.
Combined with a data-fitting divergence at observed times, the objective directly enforces both the gradient flow constraint and the data fit. The perspective unifies existing paradigms --- Path-Finding \citep{liu2026generative} and Action Matching \citep{neklyudov2023action} --- as instantiations of one residual framework. The residual framework leads us to a new particle-based method which achieves state-of-the-art results on real-world datasets. 

\paragraph{Contributions.} We summarize our contributions along the following three axes:
\begin{itemize}\itemsep0.2em
\item We recast the problem as minimizing a nonnegative residual term, a framework subsuming Path-Finding \citep{liu2026generative} and Action Matching \citep{neklyudov2023action}.

\item We introduce \emph{stitching}, a simulation-free KDE-based method, which promotes the curve $\rho$ to a first-class learnable variable alongside $\calF$ and tolerates large gaps between snapshots.
\item We achieve state-of-the-art performance on single-cell RNA trajectory inference, and interaction dynamics recovery.
\end{itemize}

\paragraph{Outline.} In \autoref{sec:problem} we present background for learning WGF from population dynamics. \autoref{sec:residuals} presents the general Wasserstein residuals framework, and \autoref{sec:vel_res} introduces our stitching method. \autoref{sec:experiments} describes our numerical experiments.
\begin{figure*}[tb]
\centering
\includegraphics[width=\linewidth]{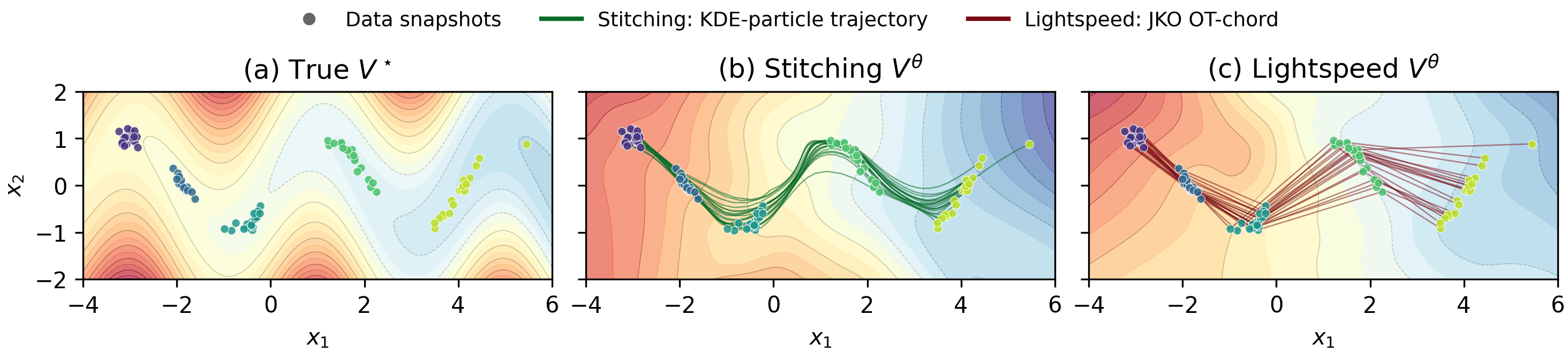}
\caption{\textbf{Stitching recovers curvature that JKO chord predictors miss.} Sparse SDE snapshots ($N=20$) on a sinusoidal valley at $t\in\{0,5,10,20,30\}$. \textbf{(a)} True potential. \textbf{(b)} Stitching's trajectory tracks the valley. \textbf{(c)} Lightspeed \citep{terpin2024learning} does not capture the curved trajectories of the particles; see \autoref{sec:exp_wavy_valley}.}
\label{fig:wavy_valley}
\end{figure*}
\section{Background}
\label{sec:problem}

We work in the Wasserstein space $(\calP_2(\R^D), W_2)$ of probability measures $\calP_2(\R^D)$ over real space $\R^D$ with finite second moment, equipped with the 2-Wasserstein distance $W_2$ \citep{ambrosio2005gradient,villani2021topics,santambrogio2015optimal}; see \autoref{app:wgf_background} for definitions. An absolutely continuous curve $\rho := (\rho_t)_{t\in [0,T]}$ admits a unique minimal velocity field $\v$ satisfying the continuity equation
\begin{equation}
\label{eq:cont_eq_intro}
\partial_t\rho_t(\x) = - \Div\big(\rho_t(\x)\v_t(\x)\big),
\end{equation}
\cite[Thm 8.3.1]{santambrogio2015optimal}. The equivalent ``Lagrangian'' description is via particle trajectories 
\begin{equation}
\label{eq:lagrang_intro}
\dot\x_t = \v_t(\x_t), \quad \x_0\sim\rho_0,
\end{equation}
satisfying $\x_t\sim\rho_t$ for all $t\in[0,T]$.

A \textbf{Wasserstein gradient flow} of a functional $\calF:\calP_2(\R^D)\to\R$ is a curve $\rho$ with velocity
\begin{equation}
\label{eq:grad_flow_intro}
\v_t(\x) = -\nabla_\x\dFdrt(\x),
\end{equation}
where $\dFdrt$ is the first variation of $\calF$ at $\rho_t$. We focus on functionals of the form 
\begin{equation}
\label{eq:functional}
\calF[\rho_t] = \int_{\R^D} \pot(\x)\,\mathrm{d}\rho_t(\x) + \int_{\R^D}\log\rho_t(\x)\,\mathrm{d}\rho_t(\x) + \int_{\R^D\times\R^D}\inter(\x'-\x)\,\mathrm{d}\rho_t(\x)\,\mathrm{d}\rho_t(\x'),
\end{equation}
where $\pot:\R^D\to \R$ is a \emph{potential} term, and $\inter:\R^D\to \R$ is a symmetric $\inter(-\x)=\inter(\x)$ \emph{interaction} term \cite[\S 5.2.2]{villani2021topics}. Besides their ubiquity, functionals of the form \eqref{eq:functional} have the advantage of having an analytic Wasserstein gradient
\begin{equation}
\label{eq:grad_functional}
\nabla_\x\dFdrt(\x) = \nabla_\x\pot(\x) + \nabla_\x\log\rho_t(\x) + \nabla_\x(\inter\ast\rho_t)(\x).
\end{equation}
A few standard examples of Wasserstein functionals appear in \autoref{app:wgf_background}. 

Let us now state the focus of our work:
\begin{tcolorbox}[enhanced,%
    colback=white,%
    colframe=black!40!white,%
    colbacktitle=white,%
    coltitle=black,%
    fonttitle=\bfseries, %
    title=Goal,%
    attach boxed title to top left={yshift=-3.0mm, xshift=0.6cm},
    boxed title style={
        frame hidden,       
        boxrule=0pt        
    }]
    Given a collection of discrete snapshots at times $\calT_{\mathrm{obs}}\subseteq [0,T]$ --- where, for simplicity, we assume $0\in \calT_{\mathrm{obs}}$ ---  and samples from marginal distributions $\{q_t\}_{t\in \calT_{\mathrm{obs}}}$, find $\calF$ whose gradient flow $\rho$ satisfies $\rho_t = q_t$ for all $t\in \calT_{\mathrm{obs}}$.
\end{tcolorbox}




\section{Wasserstein residuals}
\label{sec:residuals}
Following the classical reference \citet[Ch.~11]{ambrosio2005gradient}, gradient flows in metric spaces admit four equivalent formulations: a \emph{Tangent} condition (the continuity equation), an \emph{Energy Dissipation Equality} (EDE), the \emph{JKO scheme}, and an \emph{Evolution Variational Inequality} (EVI).  In  \autoref{app:wgf_background} we provide 
the Euclidean intuition for this four-way equivalence. For the purpose of numerical computations with Wasserstein gradient flows, the EVI is an \emph{in}equality and is thus not amenable to residual minimization. Whereas the JKO scheme has been widely used \citep{bunne2022proximal,terpin2024learning,persiianov2025ijkonet},
we focus on the Tangent and EDE formulations which have been largely overlooked. 

Both the Tangent and EDE formulations are based on \eqref{eq:cont_eq_intro} with velocity $\v_t = -\nabla\dFdrt$. 
The tangent formulation requires
\begin{equation}
\label{eq:density_form}\tag{density constraint}
  \partial_t\rho_t(\x) = \Div\left(\rho_t(\x)\nabla_\x\dFdrt(\x)\right),
\end{equation}
while the EDE is formulated in terms of the velocity field $\v_t$,
\begin{equation}
\label{eq:velocity_form}\tag{velocity constraint}
  \v_t(\x) = -\nabla_\x\dFdrt(\x).
\end{equation}
Each form is a pointwise constraint that holds if and only if $\rho$ is a gradient flow of $\calF$. 

\subsection{Density and velocity residuals}
\label{subsec:two_residuals}
The \eqref{eq:density_form} and \eqref{eq:velocity_form} produce corresponding residuals:
\begin{align}
  \calR_{\mathrm{dens}}[\calF, \rho] &:= \int_0^T\int_{\R^D}\left\|\partial_t\rho_t(\x)-\Div\left(\rho_t(\x)\nabla_\x\dFdrt(\x)\right)\right\|^2\rho_t(\x)\,\mathrm{d}\x\,\mathrm{d}t \tag{density residual}\label{eq:density_res} \\
  \calR_{\mathrm{vel}}[\calF, (\rho, \v)] &:= \int_0^T\int_{\R^D}\left\|\v_t(\x)+\nabla_\x\dFdrt(\x)\right\|^2\rho_t(\x)\,\mathrm{d}\x\,\mathrm{d}t . \tag{velocity residual}\label{eq:velocity_res}
\end{align}
Both residuals equal zero if and only if $\rho$ is a Wasserstein gradient flow of $\calF$. However, the density residual requires higher-order differentiation (due to the $\Div$ operator), so instead we focus on the velocity residual which avoids this issue. 

\subsection{Coupling residuals to data}
\label{subsec:data_coupling}

The \eqref{eq:velocity_res} enforces that $\rho$ is a gradient flow of $\calF$, but it does not couple $\rho$ to the data $\{q_t\}_{t\in \calT_{\mathrm{obs}}}$. To this end we add a statistical divergence $\calD(\rho_t,q_t)$ at observed times, where $\calD$ satisfies $\calD(\rho_t,q_t)=0\Rightarrow \rho_t=q_t$. This yields the global objective
\begin{equation}
\label{eq:dens_res_data}
  \calL(\calF,\rho,\v) = \lambda\calR_{\mathrm{vel}}[\calF,\rho,\v]+\sum_{t\in \calT_{\mathrm{obs}}}\calD(\rho_t,q_t), 
\end{equation}
with $\lambda>0$. Standard choices for $\calD$ include the Kullback--Leibler divergence (i.e., likelihood maximization), Fisher divergence (i.e., score matching (SM)~\citep{hyvarinen2005estimation}), and denoising score matching (DSM) \citep{vincent2011connection}; see \autoref{app:divergences}.

\section{Stitching}
\label{sec:vel_res}
\label{subsec:stitching}
We assume that the true functional $\calF$ is of the form \eqref{eq:functional}, and parametrize an approximating functional $\calF^\theta$ with neural networks for the potential $\pot^\theta$ and the interaction kernel $\inter^\theta$, so that the Wasserstein gradient can be written as
\begin{equation}
\label{eq:grad_functional_nn}
\nabla_\x\dFdrtt(\x) = \nabla_\x\pot^{\theta}(\x) + \nabla_\x\log\rho_t^{\theta}(\x) + \nabla_\x(\inter\ast\rho_t^{\theta})(\x).
\end{equation}

\paragraph{Curve parametrization.} 
The parametrization of the curve $\rho^{\theta}$ is done by a kernel density estimate (KDE) over moving differentiable trajectories $[0,T]\ni t\mapsto \x_{t,k}^\theta$,
\begin{equation}\label{eq:particle_param}
\rho_t^{\theta}(\x) := \sum_{k=1}^{N} \wt_k\,\kernel(\x - \x_{t,k}^\theta),
\qquad \wt_k\ge 0,\quad \sum_{k=1}^N \wt_k=1,
\end{equation}
with $\kernel$ a smooth probability kernel. Given the parametrization \eqref{eq:particle_param}, the associated velocity field is (cf. Claim \ref{cl:vec_KDE} in \autoref{app:stitching_derivation})
\begin{equation}
\label{eq:kde_vel}
\v_t^{\theta}(\x)=\frac{\sum_{k=1}^{N} \wt_k \,\kernel(\x - \x_{t,k}^\theta) \,\dot{\x}_{t,k}^\theta}{\sum_{l=1}^{N} \wt_l\,\kernel(\x - \x_{t,l}^\theta) },
\end{equation}
which is defined in terms of the particles $\{\x_{t,k}^\theta\}$ and their velocities $\{\dot{\x}_{t,k}^\theta\}$. 

\paragraph{Continuous objective.} 
With this KDE parametrization, the objective \eqref{eq:dens_res_data} reads
\begin{equation}
\label{eq:stitch_ideal_global}
\calL_{\mathrm{stitch}}(\theta) = \int_0^T \int_{\R^D}
\left\|\v_t^\theta(\x)+\nabla_\x\dFdrtt(\x)\right\|^2 \rho_t^{\theta}(\x)\mathrm{d}\x\,\mathrm{d}t+\sum_{t\in\calT_{\mathrm{obs}}}\calD(\rho_t^\theta,q_t),
\end{equation}
with $\v_t^\theta$ as in \eqref{eq:kde_vel} and $\nabla_\x\dFdrtt$ as in \eqref{eq:grad_functional_nn}.


\paragraph{Empirical-measure approximation.}
To make the computation of $\calL_{\mathrm{stitch}}$ more efficient we make the approximation
\begin{equation}
\label{eq:kde_approx}
\rho_t^{\theta}(\x) = \sum_{k=1}^{N} \wt_k\,\kernel(\x - \x_{t,k}^\theta)\approx \sum_{k=1}^{N} \wt_k\,\delta_{\x_{t,k}^\theta}(\x),
\end{equation}
i.e., we approximate the KDE by the empirical measure on its centers. We use the approximation \eqref{eq:kde_approx} both when integrating in the velocity-residual term against $\rho_t^{\theta}(\x)\mathrm{d}\x$, and when evaluating the velocity $\v_t^{\theta}$ of \eqref{eq:kde_vel}. Specifically, the velocity at a particle simplifies to $\v_t^{\theta}(\x_{t,k}^\theta) = \dot{\x}_{t,k}^\theta$ (cf. Claim \ref{cl:cont_eq_particle} in \autoref{app:stitching_derivation}), which leads to
\begin{equation}
\label{eq:stitch_ideal_global_kde}
\calL_{\mathrm{stitch}}(\theta) \approx \int_0^T \sum_{k=1}^N \wt_k
\left\|\dot{\x}_{t,k}^\theta+\nabla_\x\dFdrtt(\x_{t,k}^\theta)\right\|^2 \mathrm{d}t+\sum_{t\in\calT_{\mathrm{obs}}}\calD(\rho_t^\theta,q_t).
\end{equation}

\paragraph{Time discretization.}
We discretize time $0=t_0<t_1<\cdots<t_{K-1}=T$ and approximate $\dot{\x}_{t_j,k}^\theta \approx \Delta \x_{t_{j+1}, k}^\theta / \Delta t_{j+1}$, where $\Delta t_{j+1} := t_{j+1} - t_j$ is the width of the $j$th time interval and $\Delta \x_{t_{j+1}, k}^\theta$ is a discretization rule, e.g.,  forward Euler $ \Delta \x_{t_{j+1}, k}^\theta:= \x_{t_{j+1},k}^\theta - \x_{t_j,k}^\theta$.

\begin{tcolorbox}[enhanced,%
    colback=white,%
    colframe=black!40!white,%
    colbacktitle=white,%
    coltitle=black,%
    fonttitle=\bfseries, %
    title=Stitching Objective,%
    attach boxed title to top left={yshift=-3.5mm, xshift=0.6cm},
    boxed title style={
        frame hidden, 
        boxrule=0pt 
    }]
\begin{equation}
\label{eq:stitch_ideal_global_approx}
\widehat\calL_{\mathrm{stitch}}(\theta) := 
\sum_{j=0}^{K-2}\sum_{k=1}^N \wt_k
\frac{1}{\Delta t_{j+1}}\left\|\Delta \x_{t_{j+1}, k}^\theta+\Delta t_{j+1}\nabla_\x\dFdrtt(\x_{t_j,k}^\theta)\right\|^2+\sum_{t\in\calT_{\mathrm{obs}}}\calD(\rho_t^\theta,q_t).
\end{equation}
\end{tcolorbox}
\paragraph{Data divergence and functional.}
For the data divergence term $\calD(\rho_t^\theta,q_t)$ we still use the KDE parametrization \eqref{eq:particle_param}, which in turn allows us to use any of the divergences in \autoref{app:divergences}. For Equation \eqref{eq:grad_functional_nn}, the term $\nabla\log\rho_t^\theta$ uses \eqref{eq:particle_param} (see \autoref{app:stitching_derivation}), while the interaction term is again approximated using the centers:
\begin{equation}
\label{eq:approx_inter_kde}
\nabla(\inter^\theta\ast\rho_t^{\theta})(\x_{t_j,k}^\theta)\approx
 \sum_{l=1}^N \wt_l\,\nabla\inter^\theta(\x_{t_j,k}^\theta - \x_{t_j,l}^\theta).
\end{equation}
\paragraph{Relation to prior velocity-residual instantiations.}
The \eqref{eq:velocity_res} has previously been instantiated as a normalizing flow \citep{liu2026generative}, and can be seen as an instance of Action Matching \citep{neklyudov2023action}; see \autoref{app:related_work}. Stitching's distinction is that the trajectory $\rho^\theta$ is itself a learnable particle cloud --- rather than the output of a learned flow that requires ODE integration.

\section{Experiments}
\label{sec:experiments}

\subsection{Illustrative example: continuous flow vs.\ JKO chord}
\label{sec:exp_wavy_valley}
\autoref{fig:wavy_valley} on page~\pageref{fig:wavy_valley} contrasts continuous flow against JKO chord interpolation on the \emph{wavy valley} potential $V(x_1,x_2) = 0.6(x_2 - \sin(\pi x_1/2))^2 - 0.3\,x_1$: particles flow along a sinusoidal floor under an SDE $\mathrm{d}x = -\nabla V\,\mathrm{d}t + \sqrt{2\beta}\,\mathrm{d}W$ with $\beta = 0.00625$. We observe $N=20$ particles at five irregular times $t \in \{0, 5, 10, 20, 30\}$; the $t=10\to 20$ gap is too wide for a first-order JKO predictor to interpolate the curve faithfully. Stitching and JKOnet$^\star$ \citep{terpin2024learning} jointly learn $V^\theta$ and the diffusion coefficient (full setup in \autoref{app:wavy_valley}). 

The purpose of the experiment is to highlight the flexibility of our explicitly parametrized trajectory curves. Whereas JKOnet$^\star$'s first-order chord interpolation is restricted to straight-line segments between consecutive snapshots, the trajectories found by our stitching algorithm correctly curve according to the wavy valley. As a consequence, we more accurately recover the underlying potential,
with $R^2=0.62$ vs.\ $0.51$, where $R^2$ is the coefficient of determination between learned and ground-truth potentials evaluated on the data support.

\subsection{Synthetic potential recovery}
\label{sec:exp_synthetic}
We benchmark on the $15$ two-dimensional potentials of \citet{terpin2024learning}
$N_{\text{sim}}=2{,}000$ ODE particles are integrated for $T=5$ steps of $\Delta t = 0.01$, and split $50/50$ into train / held-out test snapshots. Under the \emph{paired} setup we observe the particle trajectories, while in the \emph{unpaired} setup, snapshots are decorrelated (See \autoref{fig:flowers_paired_vs_unpaired}). All methods use the same $(64,64)$ MLP architecture for $V^\theta$; stitching uses $N=1{,}000$ particle trajectories of length $K=50$. We report the metrics of \citet{persiianov2025ijkonet}: (i) the EMD measuring forward-prediction accuracy; (ii) the $L^2$-UVP measuring recovery of potential gradient; and (iii) the $\mathrm{Bd}^2_{W_2}$-UVP measuring distributional moment match (See \autoref{app:synthetic_full}).

\begin{table}[tb]
\centering
\caption{Comparison on the $6$ potentials of \citet[Tab.~3]{persiianov2025ijkonet} most sensitive to the paired$\to$unpaired transition. Three methods (J = JKOnet$^\star_V$, I = iJKOnet$_V$, S = stitching) on paired and unpaired regimes. EMD measures forward-prediction accuracy; $L^2$-UVP measures recovery of the underlying potential's gradient; $\mathrm{Bd}^2_{W_2}$-UVP measures distributional moment match. Stitching's potential-recovery and moment-match errors are effectively unchanged when snapshots are decorrelated, while JKO-based methods degrade or collapse. JKO methods exploit paired snapshots best on forward prediction (EMD).}
\label{tab:synthetic2d_abs}
\scriptsize
\setlength{\tabcolsep}{3pt}
\adjustbox{max width=\linewidth}{%
\begin{tabular}{llrrrrrrrrrrrrrrrrrr}
\toprule
 & & \multicolumn{9}{c}{paired} & \multicolumn{9}{c}{unpaired} \\
\cmidrule(lr){3-11} \cmidrule(lr){12-20}
 & & \multicolumn{3}{c}{EMD\,$\downarrow$} & \multicolumn{3}{c}{$L^2$-UVP\,$\downarrow$} & \multicolumn{3}{c}{$\mathrm{Bd}^2_{W_2}$-UVP\,$\downarrow$} & \multicolumn{3}{c}{EMD\,$\downarrow$} & \multicolumn{3}{c}{$L^2$-UVP\,$\downarrow$} & \multicolumn{3}{c}{$\mathrm{Bd}^2_{W_2}$-UVP\,$\downarrow$} \\
\cmidrule(lr){3-5} \cmidrule(lr){6-8} \cmidrule(lr){9-11} \cmidrule(lr){12-14} \cmidrule(lr){15-17} \cmidrule(lr){18-20}
\# & potential & J & I & S & J & I & S & J & I & S & J & I & S & J & I & S & J & I & S \\
\midrule
 1 & flowers       & $\mathbf{0.01}$  & $0.01$           & $0.30$           & $149$  & $-$ & $\mathbf{0.00}$  & $\mathbf{0.000}$ & $\mathbf{0.000}$ & $0.18$           & $0.39$ & $\mathbf{0.30}$  & $0.31$           & $151$  & $-$ & $\mathbf{0.01}$ & $1.7$ & $0.35$            & $\mathbf{0.24}$ \\
 4 & zigzag\_ridge & $0.04$           & $\mathbf{0.02}$  & $0.29$           & $4041$ & $-$ & $\mathbf{0.03}$  & $0.05$           & $\mathbf{0.003}$ & $0.17$           & $0.38$ & $\mathbf{0.29}$  & $0.30$           & $3870$ & $-$ & $\mathbf{0.04}$ & $1.7$ & $0.37$            & $\mathbf{0.29}$ \\
 6 & watershed     & $\mathbf{0.002}$ & $0.006$          & $0.30$           & $87$   & $-$ & $\mathbf{0.00}$  & $\mathbf{0.000}$ & $\mathbf{0.000}$ & $0.19$           & $0.38$ & $\mathbf{0.30}$  & $0.30$           & $89$   & $-$ & $\mathbf{0.01}$ & $1.5$ & $0.35$            & $\mathbf{0.24}$ \\
 7 & ishigami      & $\mathbf{0.01}$  & $0.02$           & $0.30$           & $719$  & $-$ & $\mathbf{0.01}$  & $\mathbf{0.000}$ & $0.001$          & $0.17$           & $0.38$ & $0.30$           & $\mathbf{0.30}$  & $733$  & $-$ & $\mathbf{0.01}$ & $1.6$ & $0.35$            & $\mathbf{0.25}$ \\
 8 & friedman      & $0.09$           & $\mathbf{0.06}$  & $0.29$           & $3899$ & $-$ & $\mathbf{0.06}$  & $0.02$           & $\mathbf{0.002}$ & $0.20$           & $0.40$ & $\mathbf{0.29}$  & $0.31$           & $3747$ & $-$ & $\mathbf{0.06}$ & $1.8$ & $0.33$            & $\mathbf{0.27}$ \\
11 & wavy\_plateau & $0.53$           & $\mathbf{0.14}$  & $0.27$           & $3411$ & $-$ & $\mathbf{0.02}$  & $7.4$            & $\mathbf{0.40}$  & $0.51$           & $0.48$ & $\mathbf{0.28}$  & $0.29$           & $3458$ & $-$ & $\mathbf{0.04}$ & $4.9$ & $0.63$            & $\mathbf{0.54}$ \\
\bottomrule
\end{tabular}%
}
\end{table}
\begin{figure}[tb]
\centering
\includegraphics[width=\linewidth]{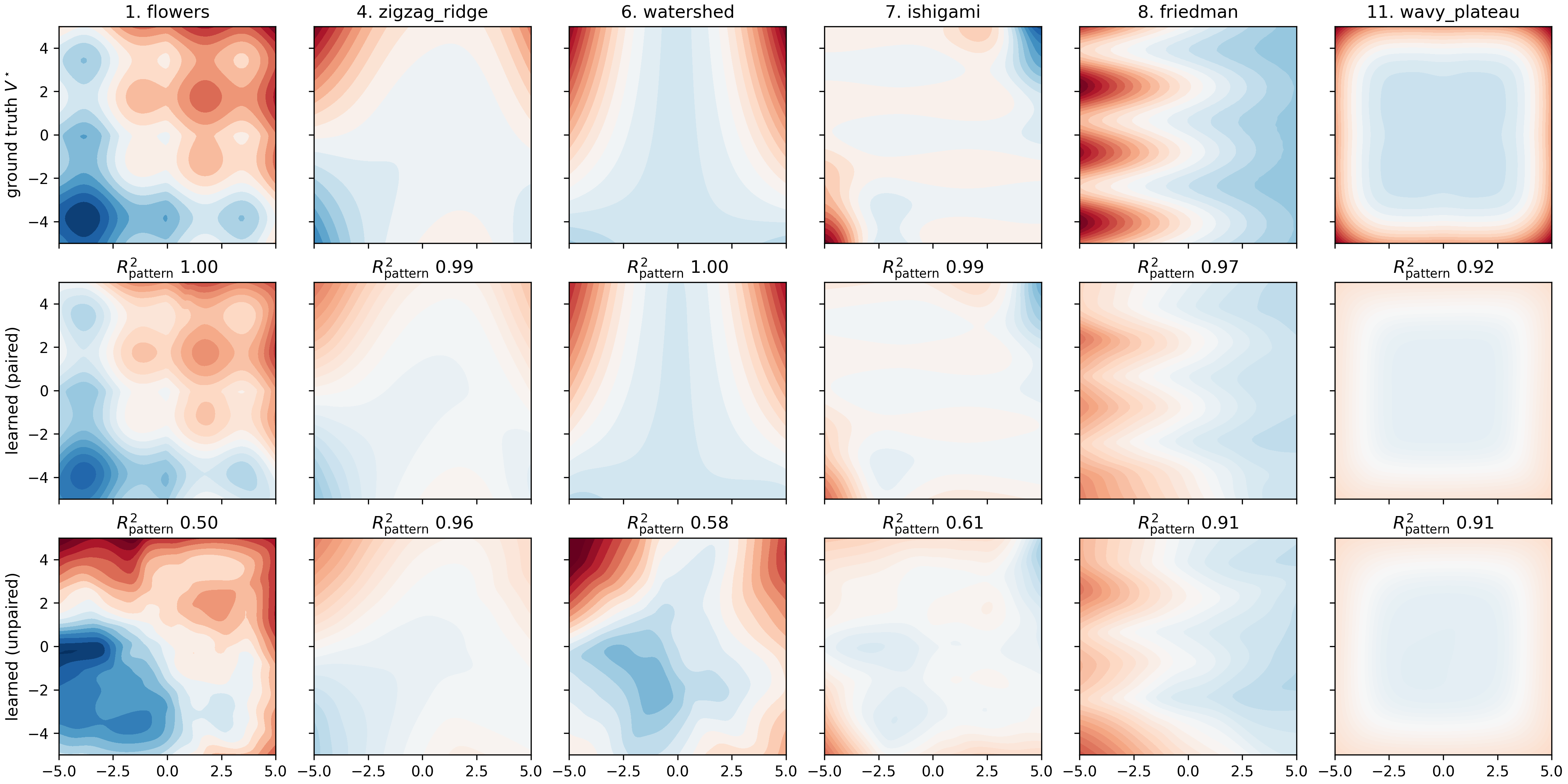}
\caption{\textbf{Stitching's potential recovery is qualitatively unchanged when consecutive snapshots are decorrelated.} Top: ground-truth $V$. Middle / bottom: stitching's $V^\theta$ trained on paired / unpaired snapshots. Per-panel labels: $R^2_{\mathrm{pattern}}$. Full $15$-potential galleries in \autoref{app:synthetic_full}.}
\label{fig:synthetic_compact}
\end{figure}
Stitching recovers $V$ on $11$ of $14$ informative landscapes in both regimes; failures are the angular \texttt{rotational} and the degenerate \texttt{flat} (full galleries in \autoref{fig:synthetic_gallery_paired} and \autoref{fig:synthetic_gallery_unpaired}).

\autoref{tab:synthetic2d_abs} reports absolute numbers on the $6$ potentials \citet[Tab.~3]{persiianov2025ijkonet} flagged as most sensitive to the paired$\to$unpaired transition. Stitching is consistent across both regimes: paired and unpaired errors stay within $\sim$2$\times$ on every metric and every potential, and stitching never collapses. JKO-step methods, by contrast, exploit paired snapshots and beat stitching on forward prediction (EMD) in the paired regime, but degrade or collapse on the unpaired UVPs --- JKOnet$^\star_V$ in particular fails to recover an informative gradient field at all ($L^2$-UVP $>100\%$) at this default training budget. The underlying reason is that stitching's velocity residual is evaluated against a learnable trajectory rather than an OT coupling between consecutive snapshots, so its quality does not depend on snapshot-to-snapshot pairing.

\subsection{Single-cell trajectory inference}
\label{sec:exp_rna}
We apply stitching to the embryoid body (EB) single-cell RNA sequencing dataset of \citet{moon2019visualizing}, which captures a population of human embryonic stem cells over $27$ days of differentiation as five snapshots at days $\{1\text{--}3,\,6\text{--}9,\,12\text{--}15,\,18\text{--}21,\,24\text{--}27\}$, indexed $t\in\{0,1,2,3,4\}$. We follow the preprocessing of \citet{tong2020trajectorynet} and reduce each cell to its first $5$ principal components, the same setup used by \citet{terpin2024learning} and \citet{persiianov2025ijkonet}. Recovering an underlying energy landscape from population snapshots in single-cell biology is a recurring problem \citep{huizing2025stories}; unlike JKO methods that require well-estimated marginals at every proximal step, stitching optimizes the full trajectory globally and handles long temporal gaps naturally.

We parametrize $\calF^\theta[\rho_t^\theta] = c_V\,\mathbb{E}_{\rho_t^\theta}[V^\theta(\x)] + c_H\,\mathbb{E}_{\rho_t^\theta}[\log\rho_t^\theta]$ with a $(64,64)$ MLP for $V^\theta(\x)$, matching the architecture of \citet{persiianov2025ijkonet}. The time-varying variant $\smash{V^\theta(\x,t)}$ concatenates $t$ to the input. The curve $\rho^\theta$ uses $N=100$ particle trajectories of length $K=50$. Full details in \autoref{app:rna_details}.

\begin{figure}[b]
\centering
\includegraphics[width=\textwidth]{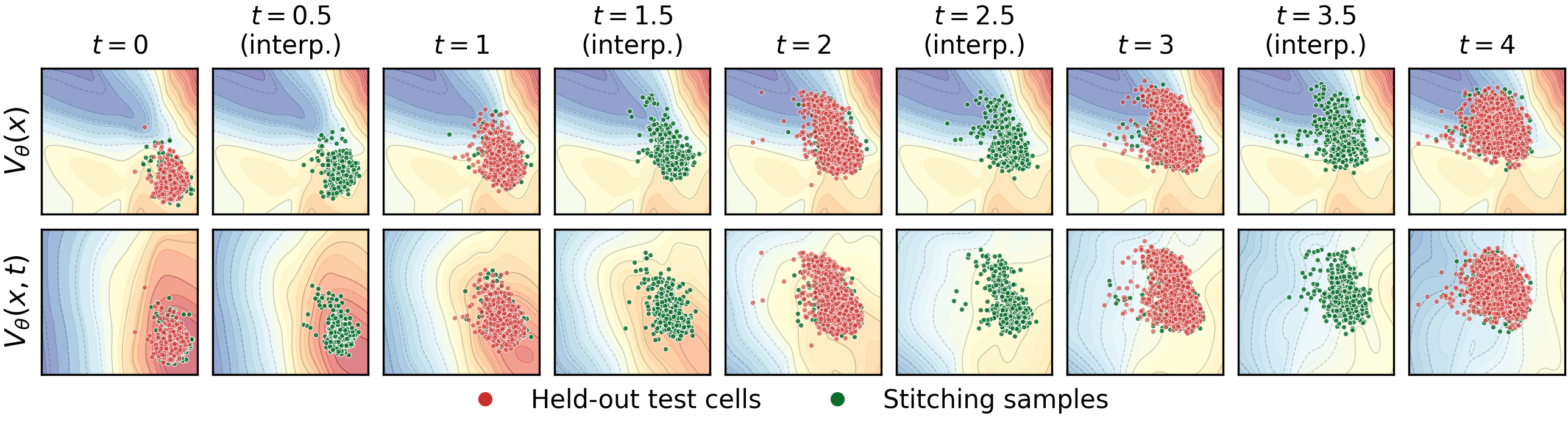}
\caption{\textbf{Both static and time-varying $V^\theta$ drive the population evolution accurately.} Contours of $V^\theta$ in (PC1, PC2) with PC3--5 fixed at the global mean; half-integer columns are unseen at training.}
\label{fig:rna_panels}
\end{figure}

The standard EB benchmark trains on all five observed snapshots and reports $W_1$ between predicted and observed marginals at each transition $\rho_k\to\rho_{k+1}$. \autoref{tab:rna_full} compares stitching against the baselines collected in \citet[Table~5]{persiianov2025ijkonet}; stitching achieves the best performance. \autoref{fig:rna_panels} visualises the time-varying $V^\theta(\x,t)$: its basin migrates with the differentiating cell population, while the static $V^\theta(\x)$ is forced to compromise across the trajectory.

\begin{table}[tb]
\centering
\caption{Full-data EB single-cell benchmark (5D), $W_1$ ($\downarrow$). Baseline results from \citet[Table~5, no standard deviations reported]{persiianov2025ijkonet}. Our results averaged over 5 seeds.}
\label{tab:rna_full}
\adjustbox{max width=\linewidth}{%
\begin{tabular}{lccccll}
\toprule
Method & $t=1$ & $t=2$ & $t=3$ & $t=4$ & Mean & Citation \\
\midrule
Neural SDE & $0.69$ & $0.91$ & $0.85$ & $0.81$ & $0.82$ & \citet{li2020scalable} \\
TrajectoryNet & $0.73$ & $1.06$ & $0.90$ & $1.01$ & $0.93$ & \citet{tong2020trajectorynet} \\
SB-FBSDE & $0.56$ & $0.80$ & $1.00$ & $1.00$ & $0.84$ & \citet{chen2022likelihood} \\
NLSB & $0.68$ & $0.84$ & $0.81$ & $0.79$ & $0.78$ & \citet{koshizuka2022neural} \\
OT-CFM & $0.78$ & $0.76$ & $0.77$ & $0.75$ & $0.77$ & \citet{tong2023improving} \\
WLF-OT & $0.65$ & $0.78$ & $0.76$ & $0.75$ & $0.74$ & \citet{neklyudov2024computational} \\
WLF-SB & $0.63$ & $0.79$ & $0.77$ & $0.74$ & $0.73$ & \citet{neklyudov2024computational} \\
JKOnet & $1.53$ & $1.27$ & $1.13$ & $1.41$ & $1.34$ & \citet{bunne2022proximal} \\
\midrule
\multicolumn{7}{l}{\textit{Static potential}} \\
JKOnet$^*_V$ & $0.99$ & $1.11$ & $1.06$ & $1.30$ & $1.12$ & \citet{terpin2024learning} \\
iJKOnet$_V$  & $0.92$ & $1.11$ & $0.95$ & $1.21$ & $1.05$ & \citet{persiianov2025ijkonet} \\
Stitching, $V^\theta(\x)$ & $\mathbf{0.46}$\stderr{0.01} & $\mathbf{0.60}$\stderr{0.01} & $\mathbf{0.60}$\stderr{0.01} & $\mathbf{0.65}$\stderr{0.01} & $\mathbf{0.58}$\stderr{0.01} & This paper \\
\midrule
\multicolumn{7}{l}{\textit{Time-varying potential}} \\
JKOnet$^*_{t,V}$ & $0.69$ & $0.77$ & $0.69$ & $0.78$ & $0.73$ & \citet{terpin2024learning} \\
iJKOnet$_{t,V}$  & $0.51$ & $0.58$ & $\mathbf{0.57}$ & $0.64$ & $0.58$ & \citet{persiianov2025ijkonet} \\
Stitching, $V^\theta(\x,t)$ & $\mathbf{0.44}$\stderr{0.01} & $\mathbf{0.56}$\stderr{0.01} & $\mathbf{0.57}$\stderr{0.01} & $\mathbf{0.60}$\stderr{0.02} & $\mathbf{0.54}$\stderr{0.01} & This paper \\
\bottomrule
\end{tabular}%
}
\end{table}

Under the harder leave-two-out protocol of \citet{shen2025mmsb} (train on $t\in\{0,2,4\}$, evaluate at held-out $t\in\{1,3\}$, $W_2$ metric), stitching also outperforms every published baseline: mean $W_2$ of $0.88$ for the time-varying variant against $0.92$ for the best published competitor (iJKOnet$_{t,V}$) and $1.12$ for iJKOnet$_V$ (full table in \autoref{app:rna_details}, \autoref{tab:rna_interp}). 

\subsection{Recovering interaction dynamics}
\label{sec:exp_interaction}

The mean-field limit ($N\to\infty$) of the WGF associated with \eqref{eq:functional} is the McKean–Vlasov process \citep{mckean1966, jabinMeanFieldLimit2017}. For finite $N$
 the corresponding interacting particle system is
\begin{equation}\label{eq:mckean-vlasov-sde}
    dX_t^i = -\nabla_{\x} V(X_t^i)dt - \frac{1}{N}\sum_{i\neq j} \nabla_{\x} W(X_t^i - X_t^j)dt + \sigma dB_t^i, \quad i=1,\dots,N.
\end{equation}
Such agent-based interacting particle models and their generalizations and extensions (for example, adding self-propulsion and alignment terms)  have been shown to approximate a wide variety of  collective behaviors across organisms and scales \citep{vicsek2012, ouellette2022physics, couzin2002, dorsogna2006, cucker2007}. 


In \eqref{eq:mckean-vlasov-sde}, $\nabla_{\x}V$ represents some environmental or intrinsic ``force'', while $\nabla_{\x}W$ encodes social interaction forces between the agents. The term $dB_t$ is the standard Wiener process.  Simultaneous recovery of both $V$ and $W$ from the marginals $X_t\sim q_t$ is a notoriously difficult problem \citep{wei2026, guan2024identifying, carrillo2025}, since these terms are mixed in the transient dynamics. Here we showcase the ability of our method to disentangle these terms, by simulating a 2D system with a confining ``Mexican hat'' potential $V(x){=}\alpha(\|x\|^2{-}\beta)^2$ and an attractive Gaussian kernel $W(r){=}\eta\,\smash{e^{-r^2}}$ (full setup in \autoref{app:cis}). The attractive interaction results in formation of one or more clusters \citep{rainer2002, motsch2014}, while the presence of $V$ confines the particles to a ring or radius $\sqrt{\beta}$.



We jointly learn an MLP $V^\theta(x)$ and radial $W^{\theta}(r)$, reporting scale-invariant pattern $R^2$ \citep{persiianov2025ijkonet} for the functionals and standard distributional metrics on held-out particles.

\begin{figure}[!htb]
\centering
\includegraphics[width=\linewidth]{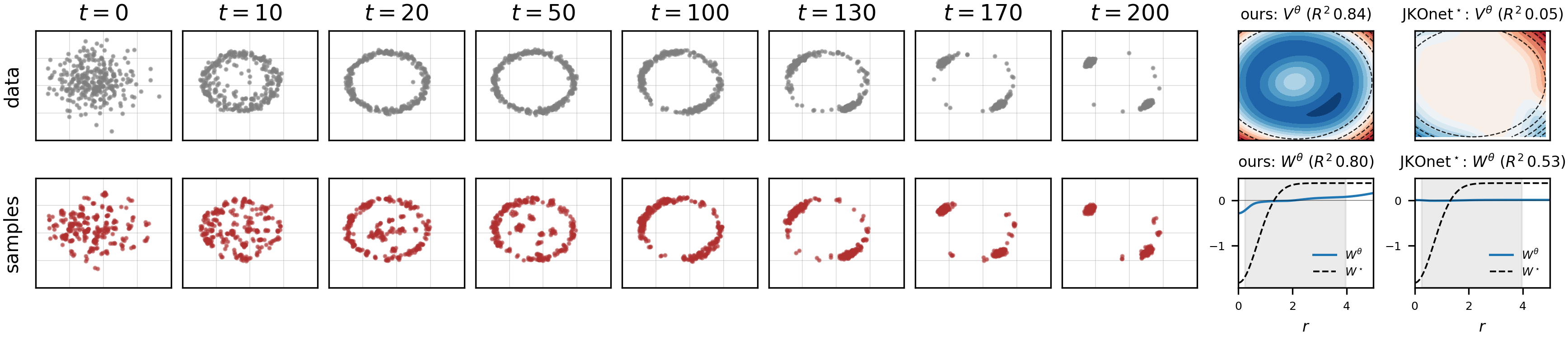}
\caption{\textbf{Stitching captures the Gaussian $\to$ ring $\to$ cluster phase transition and recovers $V, W$ at pattern $R^2{=}0.84, 0.80$.} \textbf{Top}: observed data at eight times; rightmost cell shows learned $V^\theta(x)$ filled, with $V$ contours dashed. \textbf{Bottom}: Stitching KDE samples; rightmost cell shows $W^\theta(r)$ (blue) vs.\ $W$ (dashed) over the data pair-distance band (shaded).}
\label{fig:cis_snapshots}
\end{figure}
\paragraph{Results.}
\autoref{fig:cis_snapshots} shows that stitching reproduces the phase transition from snapshots alone, and the rightmost column compares the recovered $V^\theta, W^\theta$ against the truth: pattern $R^2{=}0.84$ on $V$ and $0.81$ on $W$, on the same order as the strongest landscapes of \autoref{tab:synthetic2d}. \autoref{tab:cis} reports the distributional metrics; JKOnet$^\star$ has failed to recover $W$ completely, and is ${\sim}10\times$ slower per epoch since each JKO step requires an optimal transport coupling per snapshot pair ($200$ pairs in our example).
\begin{table}[!htb]
\centering
\caption{Interaction dynamics results on learning potential $V$, entropy and radial interaction $W$.}
\label{tab:cis}
\resizebox{\linewidth}{!}{%
\begin{tabular}{lcccccccr}
\toprule
Method & EMD\,$\downarrow$ & W$_2$\,$\downarrow$ & BW$^2$\,$\downarrow$ & MMD\,$\downarrow$ & $R^2(V)\,\uparrow$ & $R^2(W)\,\uparrow$ & per iter\,$\downarrow$ & total\,$\downarrow$ \\
\midrule
JKOnet$^\star_{V,W}$ \citep{terpin2024learning}, radial $W$     & $50.06$ & $82.62$ & $10120$ & $0.173$ & $0.04$ & $0.72$ & $\sim 4.1$\,s & $\sim 69$\,min \\
Stitching, $V^\theta(\x), W^\theta(r)$ (ours)                  & $\mathbf{0.49}$ & $\mathbf{0.85}$ & $\mathbf{0.19}$ & $\mathbf{0.017}$ & $\mathbf{0.84}$ & $\mathbf{0.81}$ & $\mathbf{0.24}$\,\textbf{s} & $\mathbf{4}$\,\textbf{min} \\
\bottomrule
\end{tabular}%
}
\end{table}

\subsection{Recovering non-gradient flows}
\label{subsec:non-grad}
As we discuss in further detail in \autoref{sec:conclusion} below, our method can in principle be applied to learning more general flows beyond WGFs. Here we showcase this ability by fitting a \emph{chiral} dynamics \citep{liebchen2022} with the stitching loss. The dynamics is interaction-only, such that the kernel has a nonzero curl. 
In \autoref{fig:chiral_snapshots} we show the original data and the reconstructed stitching trajectories over a few time snapshots. Full details are provided in 
\autoref{app:nongradient}.

\begin{figure}[htb]
\centering
\includegraphics[width=\linewidth]{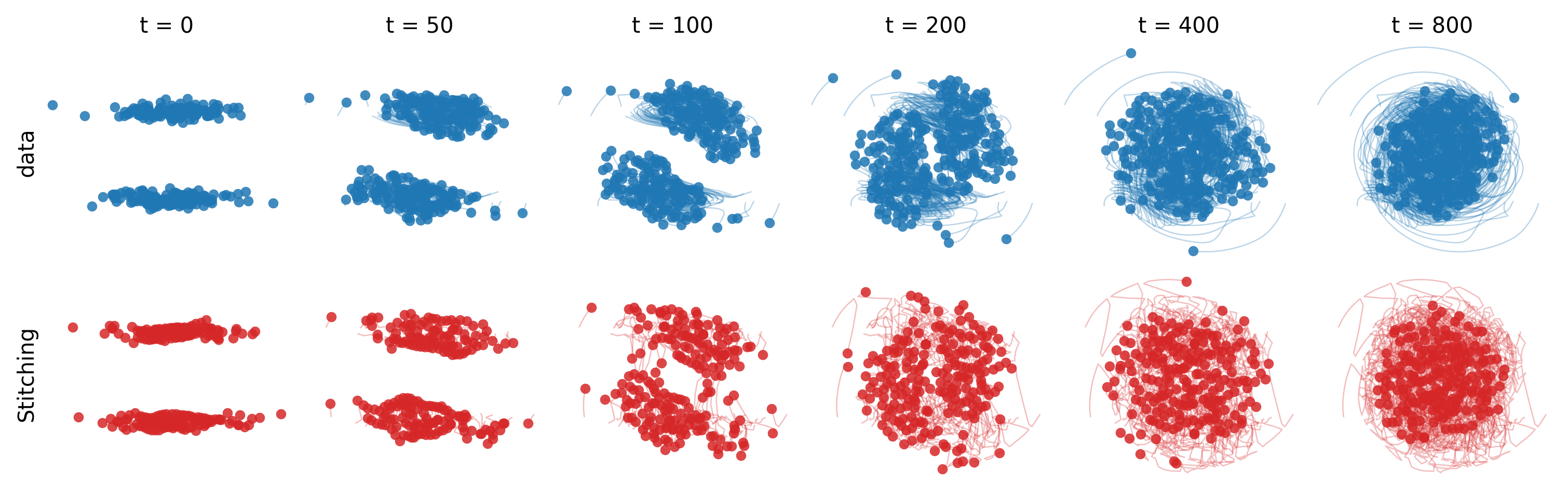}
\caption{\textbf{Stitching tracks the chiral orbiting dynamics.} Samples from the learned curve $\rho^\theta$ (model) and the data $q$ (observed) at particular time snapshots over the training window.}
\label{fig:chiral_snapshots}
\end{figure}





%

\section{Related work}
\label{sec:related_work}

In this section, we discuss some related approaches in learning potential functions for population dynamics in the context of Wasserstein gradient flows.  \autoref{app:related_work} provides a more detailed review.

\paragraph{JKO-based methods.} 
The dominant algorithmic framework to tackle our main goal is the Jordan--Kinderlehrer--Otto (JKO) scheme~\citep{jordan1998variational}, which solves a sequence of proximal problems with a Wasserstein penalty, cf. \autoref{app:wgf_background}.
Currently, there are two strategies to learn the potential $\calF$ from data $\{q_t\}_{t\in \calT_{\mathrm{obs}}}$ using the JKO scheme. 

JKOnet~\citep{bunne2022proximal} learns $\calF$ by backpropagation through the inner loop that solves the proximal problem.
The optimal transport is represented in terms of a transport map $\nabla \psi^\theta$ where $\psi^\theta: \R^D \to \R$ is an input-convex neural network (ICNN;~\citet{amos2017input}). 
iJKOnet~\citep{persiianov2025ijkonet} avoids the need for ICNNs through a min-max formulation.
JKOnet$^\star$~\citep{terpin2024learning} algorithms solve optimal transport of the empirical data once upfront,
and then fit the gradient of the functional to the observed displacements.
The above strategies directly represent $\calF$, but leave the curve $\rho$ defined implicitly through the JKO scheme on $\calF$.
In contrast, we parametrize both $\calF$ and $\rho$, coupled by a residual loss to enforce consistency.
Directly representing $\rho$ allows us to choose a temporal discretization independent from that of the observations.

\paragraph{Residual Losses.}
Concurrent work of \citet{liu2026generative} considers the problem of solving a WGF given a known functional $\calF$ using a form of \eqref{eq:velocity_res}, resulting in \emph{GenWGP}. In particular, considering a $K$-step discretization of the interval $[0, T]$, GenWGP parameterizes the flow-map $\Phi^{\theta}(t, \x)$ as a normalizing flow. This flow is applied to a population of particles, and finite difference methods are then used to approximate the velocity residual, which becomes the objective in gradient descent. 
GenWGP requires solving a costly neural ODE via application of the learned flow-map, but sidesteps side-steps the KDE approximation that stitching must take.

\paragraph{Action Matching.}

\citet{neklyudov2023action} introduces action matching (AM) for learning the velocity field of a population dynamic, and \citet{neklyudov2024computational} extends the framework to a broader class of Wasserstein Lagrangian flows beyond pure gradient flows. AM fits a velocity field to data without access to ground-truth velocities. Adapted to our setting, the goal is to learn
a functional whose gradient
matches the velocity of the data. \citet{neklyudov2024computational} effectively integrate \eqref{eq:velocity_res} by parts to eliminate the dependence on the velocity.
Unlike the divergence-based methods, AM fuses residual and data fitting into a single objective. The price is that AM requires data at the temporal endpoints and learns only the functional, leaving the trajectory $\rho$ implicit.


\section{Discussion}
\label{sec:conclusion}
\paragraph{Summary} We introduce the Wasserstein residual framework as way to learn Wasserstein gradient flows from snapshots of population dynamics. We instantiate the Wasserstein residuals framework by the stitching algorithm, which is a simulation-free particle method robust against long temporal gaps in the observed time-series. The stitching achieves state-of-the-art results on the embryoid body (EB) single-cell RNA sequencing dataset.



\paragraph{Limitations}
Being a particle method, stitching inherits the $O(N^2)$ quadratic computation per time step for the entropy and interaction terms, while the potential term is $O(N)$ linear. In addition, theoretical convergence guarantees as $N\to\infty$ remain open. Finally, the residual framework enforces a constraint using a residual regularizer, which does not guarantee the WGF condition $\calR = 0$. In practice the learned flows have residuals close to zero.


\paragraph{Broader impacts}
Stitching enables learning population dynamics from sparse snapshots in domains ranging from single-cell biology and trajectory inference to crowd and collective behaviour. The most immediate benefits are in scientific discovery with dual-use risks generic to dynamical-system inference: applied uncritically to social or behavioural data, the recovered potentials may be taken as causal explanations. 
Practitioners should validate the gradient-flow assumption against domain knowledge before drawing conclusions from the learned functional.

\subsection{Future work}
\paragraph{Density residuals.} In this work we focus on \eqref{eq:velocity_res} and instantiate it with our stitching method. Future research should explore the use of \eqref{eq:density_res}, which can be exploited using neural networks or particle-based methods.

\paragraph{Beyond Wasserstein gradient flows.} The residual approach applies in fact to a more general family of flows, beyond Wasserstein gradient flows. We demonstrate this point in \autoref{subsec:non-grad}, but the framework applies much more generally. Suppose our goal is to match the data  $\{q_t\}_{t\in\calT_{\mathrm{obs}}}$ to a curve $\rho=(\rho_t)$ of the form $
\partial_t\rho_t(\x) = - \Div\big(\rho_t(\x)\u(\x)\big)$,
where $\u$ is a vector field for which we assume some structure. For example, in this work we assume $\smash{\u_t(\x)=-\nabla_{\x}\dFdrt(\x)} $ for some functional $\calF$. To find $\u$ we can use either
\eqref{eq:density_res} or
\eqref{eq:velocity_res} by replacing $\smash{\nabla_{\x}\dFdrt(\x)}$ by a parametrized family $\u_t$. Such non-gradient-flows arise in the study of  transformers \citep{geshkovski2025mathematical},  chiral active matter \citep{liebchen2022}, non-reciprocal collection systems \citep{fruchart2021}, and ocean currents \citep{petrovic2025curlya}.

\section*{Disclosure}
Yair Shenfeld's and Ricardo Baptista's contributions to this work resulted in part from their affiliation with Basis Research Institute (an outside organization with
respect to Brown University and University of Toronto).

\bibliography{ottograd}

\appendix
\newpage

\section{Mathematical background: Wasserstein gradient flows}
\label{app:wgf_background}

This appendix collects standard definitions and results for Wasserstein gradient flows used in the main text. We refer the reader to \citet{ambrosio2005gradient,villani2021topics,santambrogio2015optimal} for comprehensive treatment.

\subsection{The Wasserstein space}
\label{subapp:wasserstein_space}

The space of probability measures with finite second moment is
\begin{equation}
\calP_2(\R^D) := \left\{\rho \text{ probability measure on } \R^D : \int_{\R^D} \left\|\x\right\|^2\,\mathrm{d}\rho(\x) < \infty\right\},
\end{equation}
endowed with the 2-Wasserstein distance
\begin{equation}
\label{eq:W_intro}
W_2^2(\rho,\nu) := \inf_{\gamma\in\Pi(\rho,\nu)}\int_{\R^D\times\R^D}\left\|\x-\x'\right\|^2\,\mathrm{d}\gamma(\x,\x'),\qquad \rho,\nu\in\calP_2(\R^D),
\end{equation}
where $\Pi(\rho,\nu)$ is the set of couplings (probability measures on $\R^D\times\R^D$ with marginals $\rho$ and $\nu$). An absolutely continuous curve $\rho:[0,T]\to\calP_2(\R^D)$ admits a unique minimal velocity field $\v$ satisfying the continuity equation
\begin{equation}
\label{eq:euler_app}
\partial_t\rho_t(\x) + \Div\bigl(\rho_t(\x)\v_t(\x)\bigr) = 0,
\end{equation}
\cite[p.~167]{santambrogio2015optimal}; \eqref{eq:euler_app} is the \emph{Eulerian} description. The equivalent \emph{Lagrangian} description is via particle trajectories
\begin{equation}
\label{eq:lagrange_app}
\dot\x_t = \v_t(\x_t),\qquad \x_0\sim\rho_0,
\end{equation}
which satisfies \cite[Theorem 8.3.1]{santambrogio2015optimal}
\begin{equation}
\label{eq:euler-lagrange_app}
\x_t\sim\rho_t,\qquad t\in[0,T].
\end{equation}

\subsection{Functionals on the Wasserstein space}
\label{subapp:examples}
A functional on the Wasserstein space is a map $\calF: \calP_2(\R^D)\to \R$. A general family of functionals, common in applications, is 
\begin{equation}
\label{eq:functional_app}
\calF[p] = \int_{\R^D} \pot(\x)\,\mathrm{d}p(\x) + \int_{\R^D}\pote(p(\x))\,\mathrm{d}\x + \int_{\R^D\times\R^D}\inter(\x'-\x)\,\mathrm{d}p(\x)\,\mathrm{d}p(\x'),
\end{equation}
where $p\in \calP_2(\R^D)$, $\pot:\R^D\to\R$, $\pote:\R_{\ge 0}\to \R$, and $\inter:\R^D\to \R$ are sufficiently regular. Let us consider some concrete examples. 

\begin{example}[Kullback--Leibler divergence]
\label{ex:KL}
Let $\pot$ be such that $\nu = e^{-\pot}$ is a probability measure in $\calP_2(\R^D)$, let $\pote(r)=r\log r$, and let  $\inter = 0$. Then \eqref{eq:functional_app} is the Kullback--Leibler divergence functional
\begin{equation}
\label{eq:func_kl}
\calF[p] = \KL[p\|\nu] = \int_{\R^D} \log\left(\frac{\rho(\x)}{\nu(\x)}\right)\,\mathrm{d}\rho(\x).
\end{equation}
\end{example}

\begin{example}[Aggregation]
\label{ex:aggregation}
Let $\pot=0$, $\pote=0$, and let $\inter$ be a symmetric interaction kernel. Then 
\begin{equation}
\label{eq:func_agg}
\calF[p] = \int_{\R^D\times\R^D}\inter(\x'-\x)\,\mathrm{d}p(\x)\,\mathrm{d}p(\x')
\end{equation}
Functionals of the form \eqref{eq:func_agg} model swarming, chemotaxis, and granular media \cite[\S 5.4]{villani2021topics}.
\end{example}

\subsection{Gradient flows}
\label{subapp:agc_four}

The classical reference \citet[Ch.~11, p.~279--280]{ambrosio2005gradient} identifies four equivalent formulations of gradient flows in metric spaces: the \emph{Tangent} condition, the \emph{Jordan--Kinderlehrer--Otto} (JKO) scheme, the \emph{Evolution Variational Inequality} (EVI), and the \emph{Energy Dissipation Equality} (EDE). To build intuition, we begin in Euclidean space, then map each formulation to its Wasserstein-space analogue.

 Let $f:\R^D\to\R$ be a smooth function and let $(x_t)_{t\in[0,T]}$ a smooth curve in $\R^D$.

\paragraph{Tangent condition.} The most direct definition of $(x_t)_{t\in [0,T]}$ being a gradient flow of $f$ is if the equation
\begin{equation}
\label{eq:tangent_euclid}
\tag{Tangent$_E$}
\dot x_t = -\nabla f(x_t) \qquad \text{for all } t\in[0,T],
\end{equation}
holds. The terminology comes from $\dot x_t$ lying in the tangent space at $x_t$. While intuitive, \eqref{eq:tangent_euclid} cannot accommodate non-differentiable $f$.

\paragraph{JKO scheme.} A more general definition of $(x_t)_{t\in [0,T]}$ being a gradient flow of $f$ if it satisfies 
\begin{equation}
\label{eq:JKO_euclid}
\tag{JKO$_E$}
x_{t+h} = \argmin_{x\in\R^D}\left[ f(x) + \frac{1}{2h}\| x - x_t\|^2 \right]
\qquad \text{for all } h>0.
\end{equation}
For differentiable $f$, the first-order optimality condition of \eqref{eq:JKO_euclid} reads
\begin{equation}
\label{eq:JKO_FO_euclid}
\tag{JKO-FO$_E$}
\frac{x_{t+h}-x_t}{h} = -\nabla f(x_{t+h}) \qquad \text{for all } h>0,
\end{equation}
which is the implicit-Euler discretization of \eqref{eq:tangent_euclid} and recovers it as $h\to 0$. Equation \eqref{eq:JKO_euclid} is the basis of \emph{proximal} algorithms, and the terminology comes from the seminal work of Jordan--Kinderlehrer--Otto (JKO)  \citep{jordan1998variational} who used this formulation to define gradient flows in Wasserstein space. 

\paragraph{Evolution Variational Inequality (EVI).} The EVI definition for $(x_t)_{t\in [0,T]}$ being a gradient flow of $f$ requires the existence of $\alpha\in\R$ such that
\begin{equation}
\label{eq:EVI_euclid}
\tag{EVI$_E$}
\frac{1}{2}\frac{\mathrm{d}}{\mathrm{d}t}\| x_t - y\|^2 \le f(y) - f(x_t) - \frac{\alpha}{2}\| x_t - y\|^2
\qquad \text{for all } y\in\R^D.
\end{equation}
For smooth $\alpha$-convex $f$, the convexity inequality $\nabla f(x_t)\cdot(y - x_t) \le f(y) - f(x_t) - \tfrac{\alpha}{2}\| x_t - y\|^2$ combined with \eqref{eq:EVI_euclid} forces $\dot x_t = -\nabla f(x_t)$. The EVI form is central to the gradient flow theory in metric spaces \citep{ambrosio2005gradient}.

\paragraph{Energy Dissipation Equality (EDE).} The EDE definition for $(x_t)_{t\in [0,T]}$ being a gradient flow of $f$ requires the validity of the identity
\begin{equation}
\label{eq:EDE_euclid}
\tag{EDE$_E$}
f(x_T) - f(x_0) = -\int_0^T \left[\frac{1}{2}\|\nabla f(x_r)\|^2 + \frac{1}{2}\|\dot x_r\|^2\right]\mathrm{d}r.
\end{equation}
To see why \eqref{eq:EDE_euclid} characterizes gradient flows, note that for any smooth curve, the chain rule and successive applications of Cauchy--Schwarz and AM--GM inequalities give
\begin{align*}
f(x_T) - f(x_0) &= \int_0^T \nabla f(x_r)\cdot\dot x_r\,\mathrm{d}r
\;\ge\; -\int_0^T \|\nabla f(x_r)\|\cdot\|\dot x_r\|\,\mathrm{d}r\\
&\;\ge\; -\int_0^T \left[\frac{1}{2}\|\nabla f(x_r)\|^2 + \frac{1}{2}\|\dot x_r\|^2\right]\mathrm{d}r.
\end{align*}
The first inequality is an equality if and only if $\dot x_r = -c_r\,\nabla f(x_r)$ for some $c_r\ge 0$, while the second inequality is an equality if and only if  $\|\dot x_r\| = \|\nabla f(x_r)\|$. Together (when $\nabla f(x_r) \ne 0$) the two equalities force $c_r = 1$, i.e.\ $\dot x_t = -\nabla f(x_t)$. It follows that \eqref{eq:EDE_euclid} holding with equality is equivalent to the tangent condition.

\paragraph{From Euclidean to Wasserstein.} The four formulations transfer to the Wasserstein space $\calP_2(\R^D)$ by replacing the Euclidean structure with the $W_2$ metric. Given a curve $(\rho_t)_{t\in [0,T]}$ in the Wasserstein space a functional $\calF:\calP_2(\R^D)\to \R$ the following four formulations define what it means for $(\rho_t)_{t\in [0,T]}$  to be a gradient flow of $\calF$. 
\begin{itemize}
\itemsep0.2em
\item \eqref{eq:tangent_euclid} becomes the continuity equation 
\begin{equation}
\label{eq:cont_eq_4grad}
\tag{Tangent$_W$}
\partial_t\rho_t(\x) = -\Div(\rho_t(\x)\v_t(\x))\quad \textnormal{where} \quad \v_t(\x) = -\nabla_\x\dFdrt(\x), 
\end{equation} 
which is the  constraint underlying the density residual \eqref{eq:density_res}.
\item \eqref{eq:JKO_euclid} becomes the Wasserstein proximal scheme 
\begin{equation}
\label{eq:JKO_4grad}
\tag{JKO$_W$}
\rho_{t+h} = \argmin_{\nu\in \calP_2(\R^D)}\left[\calF(\nu)+\frac{1}{2h}W_2^2(\rho_t,\nu)\right],
\end{equation}
which is the basis of the JKO methods; cf. \autoref{subsec:jko}. 
\item \eqref{eq:EVI_euclid} becomes a $W_2$-EVI
\begin{equation}
\label{eq:EVI_W}
\tag{EVI$_W$}
\frac{1}{2}\frac{\mathrm{d}}{\mathrm{d}t}W_2^2(\rho_t, \nu) \le \calF(\nu) - \calF(\rho_t) - \frac{\alpha}{2}W_2^2(\rho_t ,\nu)
\quad \text{for all } \quad \nu\in\calP_2(\R^D),
\end{equation}
which is
 an inequality rather than an equality, so is not amenable to residual minimization.
 
\item \eqref{eq:EDE_euclid} becomes the Wasserstein EDE, 
\begin{equation}
\label{eq:EDE_4grad}
\tag{EDE$_W$}
\calF[\rho_T]-\calF[\rho_0]= - \int_0^T \!\left[\frac{1}{2}|\dot\rho_t|^2_{W_2}
+\frac{1}{2}\int_{\R^D}\!\left\|\nabla_{\x}\dFdrt(\x)\right\|^2\!\rho_t(\x)\,\mathrm{d}\x\right]\mathrm{d}t,
\end{equation}
where 
\[
|\dot\rho_t|^2_{W_2} := \lim_{h\to 0}\frac{W_2^2(\rho_t,\rho_{t+h})}{h^2}. 
\]
Lemma \ref{lem:vel_dissipation} below shows how to reformulate \eqref{eq:EDE_4grad} in a way which is amenable to a residual formulation, which is the basis for the velocity residuals of Section \ref{sec:vel_res} and the works  \citet{liu2026generative} and \citet{neklyudov2023action}. 
\end{itemize}


\begin{lemma}[EDE form of $\calR_{\mathrm{vel}}$]
\label{lem:vel_dissipation}
Let $(\rho,\v)$ be a curve satisfying the continuity equation 
\begin{equation}
\label{eq:con_tq_lem}
\partial_t\rho_t(\x) = - \Div\big(\rho_t(\x)\v_t(\x)\big),
\end{equation}

such that, for all $t\in [0,T]$, $\rho_t\in C^1(\R^D)$ vanishes at infinity, and $\v_t\in C^1(\R^D,\R^D)$. Let $\calF:\calP_2(\R^D)\to \R$ be a functional such that, for all $t\in [0,T]$, $\calF(\rho_t)<\infty$, $\dFdrt\in C^1(\R^D)$ vanishes at infinity, and $\calF$ satisfies the chain rule
\begin{equation}
\label{eq:chain_rule}
\calF(\rho_T)-\calF(\rho_0)=\int_0^T\!\!\int_{\R^D} \dFdrt(\x)\,\partial_t\rho_t(\x)\,\mathrm{d}\x\,\mathrm{d}t.
\end{equation}
Then,
\begin{equation}
\label{eq:EDE_obj_intro_EQUIV_proof}
\begin{split}
\calR_{\mathrm{vel}}[\calF, (\rho, \v)]&=\int_0^T\!\!\int_{\R^D}
  \Bigl\|\v_t(\x) + \nabla_{\x}\dFdrt(\x)\Bigr\|^2
  \rho_t(\x)\,\mathrm{d}\x\,\mathrm{d}t\\
  &=2\left(\calF[\rho_T]-\calF[\rho_0]\right) + \int_0^T \!\left[|\dot\rho_t|^2_{W_2}
+\int_{\R^D}\!\left\|\nabla_{\x}\dFdrt(\x)\right\|^2\!\rho_t(\x)\,\mathrm{d}\x\right]\mathrm{d}t.
\end{split}
\end{equation}
\end{lemma}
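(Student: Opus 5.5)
Expand the square in the residual pointwise,
\[
\left\|\v_t+\nabla_\x\dFdrt\right\|^2=\|\v_t\|^2+2\,\v_t\cdot\nabla_\x\dFdrt+\left\|\nabla_\x\dFdrt\right\|^2 ,
\]
and integrate against $\rho_t\,\mathrm{d}\x\,\mathrm{d}t$. The third term already appears on the right-hand side of \eqref{eq:EDE_obj_intro_EQUIV_proof}. It then suffices to prove two identities: the cross term equals $2(\calF[\rho_T]-\calF[\rho_0])$, and $\int\|\v_t\|^2\rho_t\,\mathrm{d}\x$ equals the metric speed $|\dot\rho_t|^2_{W_2}$.

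\textbf{Cross term.} Start from the chain rule \eqref{eq:chain_rule} and substitute the continuity equation \eqref{eq:con_tq_lem}, which gives
\[
\calF(\rho_T)-\calF(\rho_0)=-\int_0^T\!\!\int_{\R^D}\dFdrt\,\Div(\rho_t\v_t)\,\mathrm{d}\x\,\mathrm{d}t .
\]
Integrate by parts in $\x$ to obtain $\int_0^T\!\int\nabla_\x\dFdrt\cdot\v_t\,\rho_t\,\mathrm{d}\x\,\mathrm{d}t$. The boundary term vanishes because $\dFdrt$ and $\rho_t$ vanish at infinity and $\v_t$ is $C^1$. Concretely, apply the divergence theorem on balls $B_R$ and let $R\to\infty$; this needs the flux $\dFdrt\,\rho_t\v_t$ to decay fast enough on spheres. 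Multiplying by $2$ gives the cross term.

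\textbf{Kinetic term.} This step requires $\v_t$ to be the minimal (tangent) velocity field of the curve, as recalled in \autoref{subapp:wasserstein_space}. I read the hypothesis ``a curve satisfying the continuity equation'' in that sense, since otherwise only $\int\|\v_t\|^2\rho_t\ge|\dot\rho_t|^2_{W_2}$ holds. With the minimal field, invoke the Benamou--Brenier characterization of the metric derivative (\citet[Thm.~8.3.1]{ambrosio2005gradient}, \citet{santambrogio2015optimal}): for a.e.\ $t$, $|\dot\rho_t|^2_{W_2}=\|\v_t\|^2_{L^2(\rho_t)}$. Summing the three pieces yields \eqref{eq:EDE_obj_intro_EQUIV_proof}.

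\textbf{Main obstacle.} The algebra is routine. The delicate points are:
\begin{itemize}
\item justifying that the boundary terms vanish, since ``vanishing at infinity'' alone may not control $\rho_t|\v_t|$ on large spheres; I would add mild integrability assumptions, such as $\rho_t\v_t\dFdrt\in L^1$ together with a limiting argument;
\item identifying $\v_t$ with the minimal velocity so that the kinetic term equals the metric speed rather than merely bounding it.
\end{itemize}
If the identification is unavailable, the argument instead gives the inequality ``$\ge$'' between the two sides of \eqref{eq:EDE_obj_intro_EQUIV_proof}, with equality exactly when $\v_t$ is tangent.
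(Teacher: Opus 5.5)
Your proof is correct and takes the same approach as the paper. You expand the square, turn the cross term into $2(\calF[\rho_T]-\calF[\rho_0])$ via integration by parts, the continuity equation and the chain rule, and identify $\int\|\mathbf{v}_t\|^2\rho_t\,\mathrm{d}\x$ with $|\dot\rho_t|^2_{W_2}$ through \citet[Thm.~8.3.1]{ambrosio2005gradient}.

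Your two caveats are legitimate, and the paper's proof passes over both without comment. That theorem gives equality only when $\mathbf{v}$ is the minimal (tangent) velocity field; otherwise you get only $\calR_{\mathrm{vel}}\ge$ right-hand side. Also, ``vanishing at infinity'' alone does not force the boundary flux $\dFdrt\,\rho_t\mathbf{v}_t$ on large spheres to go to zero.
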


\begin{proof}
By \citet[Theorem 8.3.1]{ambrosio2005gradient}, $\left|\dot\rho_t\right|^2_{W_2}=\int_{\R^D}\left\|\v_t(\x)\right\|^2\rho_t(\x)\,\mathrm{d}\x$. Expanding the squared norm in $\calR_{\mathrm{vel}}[\calF, (\rho, \v)]$, and integrating by parts,
\begin{align*}
&\calR_{\mathrm{vel}}[\calF, (\rho, \v)]\\
&=\int_0^T\!\!\int_{\R^D}\left\|\v_t\right\|^2\rho_t\,\mathrm{d}\x\,\mathrm{d}t
+ 2\int_0^T\!\!\int_{\R^D}\v_t\cdot\nabla_\x\dFdrt\,\rho_t\,\mathrm{d}\x\,\mathrm{d}t
+\int_0^T\!\!\int_{\R^D}\left\|\nabla_\x\dFdrt\right\|^2\rho_t\,\mathrm{d}\x\,\mathrm{d}t \\
&=\int_0^T \left|\dot\rho_t\right|^2_{W_2}\mathrm{d}t
- 2\int_0^T\!\!\int_{\R^D}\dFdrt\Div(\rho_t\v_t)\,\mathrm{d}\x\,\mathrm{d}t
+\int_0^T\!\!\int_{\R^D}\left\|\nabla_\x\dFdrt\right\|^2\rho_t\,\mathrm{d}\x\,\mathrm{d}t.
\end{align*}
By \eqref{eq:con_tq_lem} and the chain rule \eqref{eq:chain_rule},
\[
-\int_0^T\!\!\int_{\R^D}\dFdrt\Div(\rho_t\v_t)\,\mathrm{d}\x\,\mathrm{d}t
= \int_0^T\!\!\int_{\R^D}\dFdrt\,\partial_t\rho_t\,\mathrm{d}\x\,\mathrm{d}t
= \calF(\rho_T)-\calF(\rho_0),
\]
which establishes \eqref{eq:EDE_obj_intro_EQUIV_proof}.
\end{proof}

\section{Divergences}
\label{app:divergences}

In this section we expand on the divergence options $\calD$ used in the data-fitting term.

\paragraph{Kullback--Leibler (KL)}
Likelihood maximization leads to
\begin{equation}
\calD_{\mathrm{KL}}(\rho_t,q_t):=-\int_{\R^D}\!(\log\rho_t(\x))q_t(\x)\mathrm{d}\x
=\KL(q_t\|\rho_t)+\int_{\R^D}\!(\log q_t(\x))q_t(\x)\mathrm{d}\x,
\end{equation}
which can be approximated as 
\begin{equation}
\calD_{\mathrm{KL}}(\rho_t,q_t)\approx -\sum_{i=1}^{N_t} \log\rho_t(\y_{t,i}),\qquad \textnormal{where}\qquad \{\y_{t,i}\}_{i=1}^{N_t}\overset{\textnormal{i.i.d.}}{\sim} q_t. 
\end{equation}
The divergence $\calD_{\mathrm{KL}}(\rho_t,q_t)$  can be used whenever $\rho_t$ can be evaluated pointwise.

\paragraph{Score matching \citep{hyvarinen2005estimation}.}
The score matching cost
\begin{equation}
\label{eq:sm}
\int_{\R^D}\|\nabla\log \rho_t(\x)-\nabla\log q_t(\x)\|^2q_t(\x)\mathrm{d}\x,
\end{equation}
whose minimization over $\rho$ is equivalent to minimizing over $\rho$,
\begin{equation}
\label{eq:SM}
\calD_{\mathrm{SM}}(\rho_t,q_t):=\int_{\R^D}\!\Big[\left\|\nabla\log\rho_t(\x)\right\|^2+2\Delta\log\rho_t(\x)\Big]q_t(\x)\mathrm{d}\x,
\end{equation}
leads to the divergence
\begin{equation}
\calD_{\mathrm{SM}}(\rho_t,q_t)\approx \sum_{i=1}^{N_t} [\left\|\nabla\log\rho_t(\y_{t,i})\right\|^2+2\Delta\log\rho_t(\y_{t,i})],\qquad \textnormal{where}\quad \{\y_{t,i}\}_{i=1}^{N_t}\overset{\textnormal{i.i.d.}}{\sim} q_t.
\end{equation}
The divergence $\calD_{\mathrm{SM}}(\rho_t,q_t)$ can be used whenever $\nabla\log\rho_t$ is available, potentially with the extra cost computing $\nabla\log \rho_t$ is only $\log \rho_t$ is parametrized. 

\paragraph{Denoising score matching \citep{vincent2011connection}.}
Let $q_t^\sigma:=\int \kappa_\sigma(\cdot|\z)q_t(\z)\mathrm{d}\z$ with $\kappa_\sigma(\cdot|\z)=\N(\z,\sigma^2 \mathrm{I}_D)$. Minimizing over $\rho_t$ the score matching cost between $\rho_t$ and $q_t^\sigma$
\begin{equation}
\label{eq:dsm}
\int_{\R^D}\|\nabla\log \rho_t(\x)-\nabla\log q_t^\sigma(\x)\|^2q_t(\x)\mathrm{d}\x,
\end{equation}
is equivalent to minimizing over $\rho_t$, 
\begin{equation}
\calD_{\mathrm{DSM}}(\rho_t,q_t):=\int_{\R^D}\!\!\int_{\R^D}\!\|\nabla\log\rho_t(\z)-\nabla_{\z}\log \kappa_\sigma(\z|\x)\|^2 \kappa_\sigma(\z|\x)q_t(\x)\mathrm{d}\z\mathrm{d}\x. 
\end{equation}
The denosing score matching divergence $\calD_{\mathrm{DSM}}(\rho_t,q_t)$ can be 
approximated by
\begin{equation}
\calD_{\mathrm{DSM}}(\rho_t,q_t)\approx \sum_{i=1}^{N_t} \|\nabla\log\rho_t(\z_{t,i})-\nabla_{\z}\log \kappa_\sigma(\z_{t,i}|\y_{t,i})\|^2,\, \,\z_{t,i}\sim \kappa_\sigma(\cdot|\y_{t,i}),\, \{\y_{t,i}\}_{i=1}^{N_t}\overset{\textnormal{i.i.d}}{\sim}q_t. 
\end{equation}
In contrast to \eqref{eq:SM}, denoising score matching $\calD_{\mathrm{DSM}}$ avoids the computation of the Laplacian in $\calD_{\mathrm{SM}}$ at the cost of fitting the smoothed $q_t^\sigma$ rather than $q_t$.



\section{Stitching: details}
\label{app:stitching_derivation}


In this section we derive the velocity expression  \eqref{eq:kde_vel} and explain the reasoning behind \eqref{eq:stitch_ideal_global_approx}. We start with Equation \eqref{eq:kde_vel}.

\begin{claim}
\label{cl:vec_KDE}
 Let $\rho=(\rho_t)_{t\in [0,T]}$ be of the form $\rho_t(\x) = \sum_{k=1}^N w_k \,\kernel(\x - \x_{t,k})$, where for each $k=1,\ldots, N$, the trajectory $[0,T]\ni t\mapsto \x_{t,k}$ is differentiable. Then, $(\rho,\v)$ satisfies the continuity equation \eqref{eq:cont_eq_intro} with
\begin{equation}
\label{eq:vec_KDE}
\v_t(\x)=\frac{\sum_{k=1}^N w_k \,\kernel(\x - \x_{t,k}) \,\dot\x_{t,k}}{\sum_{l=1}^N w_l\,\kernel(\x - \x_{t,l}) }.
\end{equation}
\end{claim}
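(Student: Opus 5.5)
The plan is a direct computation: differentiate $\rho_t$ in time, compute $\Div(\rho_t\v_t)$ with the proposed $\v_t$, and check that the two cancel. The key observation is that the denominator in \eqref{eq:vec_KDE} is exactly $\rho_t(\x)$. So wherever $\rho_t(\x)>0$ we have
\[
\rho_t(\x)\v_t(\x)=\sum_{k=1}^N w_k\,\kernel(\x-\x_{t,k})\,\dot\x_{t,k}.
\]
The continuity equation therefore reduces to a linear identity between two finite sums, and each summand can be checked separately.

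\textbf{Time derivative.} Since $\kernel$ is smooth and $t\mapsto\x_{t,k}$ is differentiable, the chain rule gives
\[
\partial_t\big[\kernel(\x-\x_{t,k})\big]=-\nabla\kernel(\x-\x_{t,k})\cdot\dot\x_{t,k}.
\]
Hence $\partial_t\rho_t(\x)=-\sum_k w_k\nabla\kernel(\x-\x_{t,k})\cdot\dot\x_{t,k}$.

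\textbf{Divergence.} The vector $\dot\x_{t,k}$ does not depend on $\x$, so
\[
\Div\big(\kernel(\x-\x_{t,k})\dot\x_{t,k}\big)=\nabla\kernel(\x-\x_{t,k})\cdot\dot\x_{t,k}.
\]
Summing with weights $w_k$ gives $\Div(\rho_t\v_t)=\sum_k w_k\nabla\kernel(\x-\x_{t,k})\cdot\dot\x_{t,k}=-\partial_t\rho_t$. This is \eqref{eq:cont_eq_intro}, holding pointwise in the classical sense. Equivalently, each translated kernel $\kernel(\cdot-\x_{t,k})$ is transported by the constant field $\dot\x_{t,k}$, and the continuity equation is linear in the flux $\rho\v$.

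\textbf{Main obstacle: the zero set of $\rho_t$.} Where $\rho_t(\x)=0$, formula \eqref{eq:vec_KDE} is $0/0$. I would handle this by defining $\v_t$ arbitrarily there, for instance $\v_t=0$. For a strictly positive kernel such as a Gaussian the issue does not arise. For a compactly supported kernel I would argue as follows.
\begin{itemize}
\item Since $w_k\ge0$ and $\kernel\ge0$, $\rho_t(\x)=0$ forces every term $w_k\kernel(\x-\x_{t,k})$ to vanish. So the flux $\sum_k w_k\kernel(\cdot-\x_{t,k})\dot\x_{t,k}$ agrees with $\rho_t\v_t$ everywhere, including on $\{\rho_t=0\}$.
\item The pointwise computation may fail there, but the weak (distributional) form of the continuity equation only involves $\rho_t$ and the flux $\rho_t\v_t$. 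The weak form then follows from the same calculation after integrating against a test function and integrating by parts.
\end{itemize}

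One could additionally note that this $\v_t$ need not be the minimal velocity field, but that is not required by the claim.
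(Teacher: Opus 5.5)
Your proof is correct and follows essentially the same route as the paper. You differentiate each translated kernel in time by the chain rule, recognise the result as $-\Div$ of the flux $\sum_k w_k\kernel(\x-\x_{t,k})\dot\x_{t,k}$, and note that this flux equals $\rho_t\v_t$ because the denominator in \eqref{eq:vec_KDE} is exactly $\rho_t$. Your explicit handling of the zero set of $\rho_t$, and writing the divergence rather than a gradient in the final step, are extra care the paper's argument leaves implicit.
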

\begin{proof}
Since $\nabla_{\x_{t,k}}\kernel(\x - \x_{t,k})=-\nabla_{\x}\kernel(\x - \x_{t,k})$ we have
\begin{align*}
\partial_t\rho_t(\x)& = \sum_{k=1}^N w_k \,\partial_t\kernel(\x - \x_{t,k})=  \sum_{k=1}^N w_k \,\nabla_{\x_{t,k}}\kernel(\x - \x_{t,k}) \cdot \dot\x_{t,k}\\
&=-\nabla_{\x}\left(\sum_{k=1}^N w_k \,\kernel(\x - \x_{t,k}) \dot\x_{t,k}\right)=-\nabla_{\x}\left(\rho_t(\x)\frac{\sum_{k=1}^N w_k \,\kernel(\x - \x_{t,k}) \dot\x_{t,k}}{\sum_{l=1}^Nw_l\,\kernel(\x - \x_{t,l}) }\right).
\end{align*}
\end{proof}
Next we explain the reasoning behind \eqref{eq:stitch_ideal_global_approx}. 

\begin{claim}
\label{cl:cont_eq_particle}
Let $\rho_t=\sum_{k=1}^N w_k\delta_{\x_{t,k}}$ with differentiable trajectories $t\mapsto \x_{t,k}$, and let $\v$ be the unique minimal velocity satisfying \eqref{eq:cont_eq_intro} in the distributional sense. Then $\dot\x_{t,k}=\v_t(\x_{t,k})$ for each $k$.
\end{claim}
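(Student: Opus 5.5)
The plan is to show directly that the field defined on the atoms by the particle velocities solves the continuity equation in the distributional sense. I then show that, for a finitely supported measure, this field is automatically the minimal one. Throughout I assume that at time $t$ the positions $\x_{t,1},\dots,\x_{t,N}$ are pairwise distinct, with $w_k>0$ (atoms with $w_k=0$ are invisible to $\rho_t$ and can be dropped).

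This assumption cannot be removed. If $\x_{t,k}=\x_{t,l}$ with $\dot\x_{t,k}\neq\dot\x_{t,l}$, then no single vector field can take both values at that point. The minimal velocity there is the weighted average $(w_k\dot\x_{t,k}+w_l\dot\x_{t,l})/(w_k+w_l)$, in analogy with \eqref{eq:vec_KDE}. So the claim is proved at every non-collision time, which is all $t$ if the trajectories never cross.

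\textbf{Step 1 (existence of a solving field).} Define $\u_t$ on $\supp\rho_t$ by $\u_t(\x_{t,k}):=\dot\x_{t,k}$; this is well defined because the atoms are distinct. Take a test function $\varphi\in C_c^\infty(\R^D)$. By the chain rule,
\[
\frac{\mathrm{d}}{\mathrm{d}t}\int\varphi\,\mathrm{d}\rho_t=\sum_k w_k\,\nabla\varphi(\x_{t,k})\cdot\dot\x_{t,k}=\int\nabla\varphi\cdot\u_t\,\mathrm{d}\rho_t .
\]
This is exactly the weak form of \eqref{eq:cont_eq_intro}. The particle analogue of the computation in Claim \ref{cl:vec_KDE} is the same identity with $\kernel$ replaced by a Dirac mass. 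Since $\u_t$ is bounded on the finite support (for $C^1$ trajectories, say), it also lies in $L^2(\rho_t)$ with a locally integrable norm in $t$.

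\textbf{Step 2 (identifying the minimal velocity).} Let $\v$ be any velocity field solving \eqref{eq:cont_eq_intro} for the same curve. Subtracting the two weak formulations gives, for a.e.\ $t$,
\[
\int\nabla\varphi\cdot(\v_t-\u_t)\,\mathrm{d}\rho_t=0 \quad\text{for every } \varphi\in C_c^\infty(\R^D).
\]
The minimal (tangent) velocity is characterized \citep[Thm.~8.3.1]{ambrosio2005gradient} as the unique solution lying in
\[
\overline{\{\nabla\varphi:\varphi\in C_c^\infty\}}^{L^2(\rho_t)}.
\]
For $\rho_t$ supported on $N$ distinct points, $L^2(\rho_t)\cong(\R^D)^N$. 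This space is spanned by gradients: choose disjoint small balls around the atoms and, for any prescribed vectors $\mathbf{a}_k$, take $\varphi=\sum_k\chi_k(\x)\,\mathbf{a}_k\cdot(\x-\x_{t,k})$ with bump functions $\chi_k\equiv1$ near $\x_{t,k}$. Then $\nabla\varphi(\x_{t,k})=\mathbf{a}_k$. Hence the tangent space is all of $L^2(\rho_t)$, and in particular $\u_t$ belongs to it. Taking $\varphi$ with $\nabla\varphi(\x_{t,k})=\v_t(\x_{t,k})-\u_t(\x_{t,k})$ in the orthogonality relation yields
\[
\sum_k w_k\,\|\v_t(\x_{t,k})-\u_t(\x_{t,k})\|^2=0 .
\]
Since each $w_k>0$, this gives $\v_t(\x_{t,k})=\dot\x_{t,k}$ for every $k$.

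The main obstacle is the non-collision issue in Step 2. The gradient-interpolation argument needs distinct atoms, and the conclusion genuinely fails at collisions. I would therefore state the claim for a.e.\ $t$ and note that non-crossing holds generically, which is the situation in which the approximation \eqref{eq:kde_approx} is used.
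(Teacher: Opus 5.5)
Your proof is correct and follows essentially the paper's route: substitute the Dirac curve into the weak continuity equation and use test functions localized at a single atom (your bump-function construction prescribing $\nabla\varphi(\x_{t,k})$) to read off $\v_t(\x_{t,k})=\dot\x_{t,k}$. Your distinct-atoms and a.e.-in-$t$ qualifications make explicit what the paper's localization step tacitly assumes, and the collision caveat can be sharpened: a collision with $\dot\x_{t,k}\neq\dot\x_{t,l}$ is an isolated zero of $t\mapsto\x_{t,k}-\x_{t,l}$, so such times are countable, while coincident atoms with equal velocities can simply be merged, so the a.e.-in-$t$ statement holds with no non-crossing assumption.
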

\begin{proof}
For any test function $\eta:[0,T]\times\R^D\to\R$, weak satisfaction of the continuity equation gives
\[
\int_s^{s+h}\!\!\int [\partial_t\eta+\nabla\eta\cdot \v_t]\mathrm{d}\rho_t\mathrm{d}t = \int\eta(s+h,\cdot)\mathrm{d}\rho_{s+h}-\int\eta(s,\cdot)\mathrm{d}\rho_s.
\]
Substituting $\rho_t=\sum_{k=1}^Nw_k\delta_{\x_{t,k}}$, and choosing $\eta$ localized around a single $\x_{t,k}$, yields $\dot\x_{t,k}=\v_t(\x_{t,k})$.
\end{proof}

\paragraph{From the velocity residual to the boxed objective.}
Substituting the KDE parametrization \eqref{eq:particle_param} into the velocity residual \eqref{eq:velocity_res} and approximating the spatial expectation at the centers (exact as $\kernel \to \delta$) gives
\[
\calR_{\mathrm{vel}}[\calF^\theta, \rho^\theta, \v^\theta] \;\approx\; \int_0^T \sum_{k=1}^N \wt_k\,\left\|\v_t^\theta(\x_{t,k}^\theta) + \nabla_\x\dFdrtt(\x_{t,k}^\theta)\right\|^2 \,\mathrm{d}t.
\]
By Claim \ref{cl:cont_eq_particle}, $\v_t^\theta(\x_{t,k}^\theta) = \dot\x_{t,k}^\theta$ in this limit. Discretising time on $0 = t_0 < \cdots < t_{K-1} = T$ with forward Euler $\dot\x_{t_j,k}^\theta \approx \Delta\x_{t_{j+1},k}^\theta / \Delta t_{j+1}$ (where $\Delta\x_{t_{j+1},k}^\theta := \x_{t_{j+1},k}^\theta - \x_{t_j,k}^\theta$), replacing the time integral by the left-Riemann sum $\int_0^T f(t)\,\mathrm{d}t \approx \sum_j \Delta t_{j+1}\,f(t_j)$, and applying the algebraic identity $h\,\|a/h + b\|^2 = (1/h)\,\|a + h\,b\|^2$ on each summand yields the boxed displacement form \eqref{eq:stitch_ideal_global_approx}.

\paragraph{Computation of the score term.}
The KDE \eqref{eq:particle_param} has analytic score
\begin{equation}
\label{eq:kde_score}
\nabla_\x\log\rho_t^\theta(\x) \;=\; \frac{\sum_{k=1}^N \wt_k\,\nabla_\x\kernel(\x - \x_{t,k}^\theta)}{\sum_{l=1}^N \wt_l\,\kernel(\x - \x_{t,l}^\theta)},
\end{equation}
which we need to evaluate at the centers $\x_{t,j}^\theta$. The numerator self-term $\wt_j \nabla\kernel(\mathbf{0}) = \mathbf{0}$ is harmless. The denominator self-term $\wt_j\kernel(\mathbf{0})$ is not: it is the kernel's peak value, and it stays large regardless of where the other particles sit. When the neighbours are far compared to the bandwidth, the numerator (a sum of tiny neighbour kernels and their gradients) is already small, and dividing by an over-strong self-normaliser collapses the score toward $\mathbf{0}$. The entropy contribution then drops out of the velocity residual and particles collapse onto minima of $V^\theta$ instead of spreading.

We drop particle $j$ from both numerator and denominator when evaluating the score at $\x_{t,j}^\theta$:
\begin{equation}
\label{eq:kde_score_loo}
\nabla_\x\log\rho_t^\theta(\x_{t,j}^\theta) \;\approx\; \frac{\sum_{k\neq j} \wt_k\,\nabla_\x\kernel(\x_{t,j}^\theta - \x_{t,k}^\theta)}{\sum_{l\neq j} \wt_l\,\kernel(\x_{t,j}^\theta - \x_{t,l}^\theta)}.
\end{equation}
The two small quantities now divide to give a finite vector pointing toward the nearest neighbours, restoring the diffusion pressure of the entropy term \citep{wand1994kernel}.

\paragraph{Centers vs.\ KDE-Monte-Carlo quadrature.}
The boxed objective evaluates the residual integrand at the particle centers $\{\x_{t,k}^\theta\}$. The implementation also supports a stochastic alternative: for each particle, draw a perturbation $\varepsilon_{t,k} \sim \kernel(\mathbf{0}, h^2 \mathrm{I})$ from the KDE kernel and evaluate the residual at $\y_{t,k} = \x_{t,k}^\theta + \varepsilon_{t,k}$, with the velocity at $\y_{t,k}$ given by the Nadaraya--Watson estimator \eqref{eq:vec_KDE}. The two estimators are combined convexly through a parameter $\alpha \in [0,1]$:
\begin{equation}
\label{eq:alpha_quadrature}
\widehat\calR_{\mathrm{vel}}^\alpha \;:=\; (1-\alpha)\,\widehat\calR_{\mathrm{vel}}^{\mathrm{centers}} \;+\; \alpha\,\widehat\calR_{\mathrm{vel}}^{\mathrm{KDE\text{-}MC}}.
\end{equation}
At $\alpha = 0$ the residual is the deterministic centers approximation: exact in the small-bandwidth limit, but blind to off-trajectory information. At $\alpha = 1$ it is an unbiased Monte-Carlo estimator under the KDE measure, at the cost of MC variance and a Nadaraya--Watson regression bias of order $O(h^2)$. Intermediate $\alpha$ trades the two biases against each other. We use $\alpha = 0$ by default in all experiments. We found the $\alpha = 0.5$ to give slightly better results in the low-data regime wavy-valley illustration.

\paragraph{Time-discretisation schemes.}
The forward-Euler discretisation $\dot\x_{t_j,k}^\theta \approx \Delta\x_{t_{j+1},k}^\theta / \Delta t_{j+1}$ used above is one of several supported choices. The implementation also exposes:
\begin{itemize}\itemsep0.2em
    \item \emph{Backward (implicit) Euler}: Evaluates $\nabla_\x\dFdrtt$ at the right endpoint $\x_{t_{j+1},k}^\theta$ rather than $\x_{t_j,k}^\theta$.
    \item \emph{Midpoint}: Evaluates both the displacement and $\nabla_\x\dFdrtt$ at $t_{j+1/2}$, $\dot\x_{t_{j+1/2},k}^\theta \approx \Delta\x_{t_{j+1},k}^\theta / \Delta t_{j+1}$. $O(h^2)$ accurate per step.
    \item \emph{Trapezoidal}: Averages the forward and backward Euler residuals, evaluating $\nabla_\x\dFdrtt$ at both endpoints. $O(h^2)$ accurate; this is the default in our experiments and the natural choice on non-uniform grids.
\end{itemize}
On uniform grids the schemes differ only in higher-order error terms; the trade-off is between accuracy (midpoint, trapezoidal) and per-step cost (forward Euler avoids re-evaluating $\nabla_\x\dFdrtt$ at the right endpoint).

\section{Extended Related Work} 
\label{app:related_work}

In this section we outline the details for the related work to our approach, including JKO-based methods in~\autoref{subsec:jko}, methods based on residual losses in~\autoref{subsec:normalizing_flow}, and action matching in~\autoref{subsec:am}. Further methods are discussed in \autoref{subsec:const_related_exten}-- \autoref{subsec:other-methods}. 


\subsection{JKO-based methods} \label{subsec:jko}
The dominant algorithmic framework to tackle our main goal is the Jordan--Kinderlehrer--Otto (JKO) scheme~\citep{jordan1998variational}. In this scheme we fix a discretization step $h>0$ and sequentially solve
\begin{equation}
\label{eq:JKO}
\tag{JKO}
\rho_{t+h} \;:=\; \argmin_{\nu\in \calP_2(\R^D)}\left[\calF(\nu)+\frac{1}{2h}W_2^2(\rho_t,\nu)\right].
\end{equation}
As $h\to 0$, this iteration scheme converges to the Wasserstein gradient flow of $\calF$ \cite[Ch.~11]{santambrogio2015optimal}. 
There are currently two strategies for the purpose of learning the potential $\calF$ from the data $\{q_t\}_{t\in \calT_{\mathrm{obs}}}$ using the JKO scheme, which we outline next. 

\paragraph{Forward simulation.} The JKOnet algorithm \citep{bunne2022proximal} iterates \eqref{eq:JKO}, starting from $\rho_0=q_0$, by solving a bi-level optimization. JKONet uses Brenier's theorem to characterize the solution to \eqref{eq:JKO} as an optimization problem over a convex function $\psi$. In particular, given a parametrized functional $\calF^{\xi}$, the inner optimization problem $\min_{\nu\in \calP_2(\R^D)}\left[\calF^{\xi}(\nu)+\frac{1}{2h}W_2^2(\rho_t,\nu)\right]$ is rewritten as,
\begin{equation}
\label{eq:JKO_equiv}
\begin{split}
&\min_{\nu\in \calP_2(\R^D)}\left[\calF^{\xi}(\nu)+\frac{1}{2h}W_2^2(\rho_t^{\xi},\nu)\right]\\
&=\min_{\nabla\psi^{\theta}: \psi^{\theta}:\R^D\to \R\textnormal{ convx}}\left[\calF^{\xi}(\nabla\psi_{\sharp}^{\theta}\rho_t^{\xi})+\frac{1}{2h}\int_{\R^D}\|\x-\nabla\psi^{\theta}(\x)\|^2\rho_t^{\xi}(\x)\,\mathrm{d}\x\right],
\end{split}
\end{equation}
where $\{\psi^{\theta}\}_{\theta}$ is taken to be a class of convex neural networks (ICNN); see~\citet{mokrov2021large} for using ICNNs for the proximal step. Then, given a minimizer $\theta$  set $\rho_{t+h}^{\xi}:=(\nabla\psi^{\theta})_{\sharp}\rho_t^{\xi}$, and then minimize over $\xi$, $W_2^2(\rho_{t+h}^{\xi},q_{t+h})$, (in fact a Sinkhorn divergence). The iJKOnet \citep{persiianov2025ijkonet} algorithm avoids the usage of ICNN by using a min-max formulation. 

\paragraph{First-order linearization.}
The JKOnet$^\star$ \citep{terpin2024learning} algorithms use the optimality condition of \eqref{eq:JKO},
\begin{equation}
\label{eq:JKO_FO}
\tag{JKO-FO}
\frac{\x'-\x}{h}\;=\;-\nabla \frac{\delta\calF}{\delta\rho_{t+h}}(\x') \qquad \forall (\x,\x')\in \supp \gamma_t,
\end{equation}
where $\gamma_t$ is the optimal transport plan between consecutive marginals in $\{q_t\}_{t\in \calT_{\mathrm{obs}}}$. In this style of algorithms the optimal transport problem is first solved between any two consecutive marginals in $\{q_t\}_{t\in \calT_{\mathrm{obs}}}$. Given these optimal transport plans $\{\gamma_t\}$ the functional $\calF$ is parametrized by a neural network $\calF^{\theta}$, and the optimization problems becomes minimizing over $\theta$ the residual of \eqref{eq:JKO_FO}. 

In both strategies the only learnable object is the functional $\calF$. The curve $\rho$ can be recovered only by iterating the scheme \eqref{eq:JKO}. Therefore, trajectory expressivity is coupled to $\calF$'s, and evaluating $\rho_t$ at unobserved times requires post-hoc simulation. Instead, our approach promotes the parametrization $\rho^\theta$ of the curve $\rho$ to a first-class learnable object on equal footing with $\calF^\theta$, coupled only through a residual loss. Both are optimized at the same level, and the curve's capacity is determined by its own parametrization, not by what the JKO operator can produce.




\subsection{Residual Losses}
\label{subsec:normalizing_flow}

Concurrent work of \citet{liu2026generative} considers the problem of solving a WGF given a known functional $\calF$ using a form of \eqref{eq:velocity_res}, resulting in \emph{GenWGP}. In particular, considering a $K$-step discretization of the interval $[0, T]$, GenWGP parameterizes the flow-map $\Phi^{\theta}(t, \x)$ as a normalizing flow, i.e., $\Phi^{\theta}(t_k, \cdot) = \Psi_k \circ \cdots \Psi_1 \circ \rho_0$, for invertible neural networks $\Psi_k$ ($k=1, \dots, K$). 

This flow is applied to a population of particles, $x_k^{(i)}$; densities can then be empirically evaluated via the normalizing flow, which enables an approximate evaluation of velocity fields using finite difference methods. The resulting approximation of \eqref{eq:velocity_res} is then optimized via gradient descent. 

The GenWGP approach is stated only for a known functional $\mathcal{F}$, but can be adapted to our setting where $\calF$ is unknown. The main drawback, compared to stitching, in this inverse problem setting is that GenWGP requires \emph{simulation}; in particular, the normalizing flow comprises solving a neural ODE. In contrast, stitching does not require an ODE solve, as the particles $\x^{\theta}$ are directly parameterized as part of the resulting optimization problem. Notably, however, GenWGP does not require a further approximation of the density $\rho_t$, e.g., in the form of a KDE.

\subsection{Action Matching}
\label{subsec:am}

\citet{neklyudov2023action} introduced action matching (AM) for learning the velocity field of a population dynamic, and \citet{neklyudov2024computational} extends the framework to a broader class of Wasserstein Lagrangian flows beyond pure gradient flows. The main idea of \citet{neklyudov2023action} is to fit a velocity field to data without access to ground-truth velocities. Adapted to our setting, the goal is to learn $\calF^\theta$ such that $-\nabla_\x\frac{\delta\calF^\theta}{\delta\rho_t^\theta}$ matches the (unknown) true velocity $\v_t^\star$ of the data. The naive objective
\begin{equation}
\label{eq:AM_ideal}
\int_0^T\!\!\int_{\R^D}\left\|\nabla_\x\dFdrtt(\x)+\v_t^\star(\x)\right\|^2q_t(\x)\,\mathrm{d}\x\,\mathrm{d}t
\end{equation}
cannot be optimized as written due to the unknown velocity. By proceeding as in \citet[Theorem 2.2]{neklyudov2023action}, using integration-by-parts, the $\theta$-dependent part of the objective is given by
\begin{align}
\label{eq:AM}
\calL_{\textnormal{AM}}(\theta) &= \int_{\R^D} \frac{\delta\calF^{\theta}}{\delta\rho_0^{\theta}}(\x)q_0(\x)\mathrm{d}\x-\int_{\R^D} \frac{\delta\calF^{\theta}}{\delta\rho_T^{\theta}}(\x)q_T(\x)\mathrm{d}\x \\
&\quad +\int_0^T\!\!\int_{\R^D}\!\Big[\frac{1}{2}\left\|\nabla_\x\dFdrtt(\x)\right\|^2+\partial_t\dFdrtt(\x)\Big]q_t(\x)\,\mathrm{d}\x\,\mathrm{d}t,
\end{align}
which can be evaluated using only samples from $q_t$. In our taxonomy, AM is an example of the velocity residual evaluated at data samples, with the integration-by-parts trick eliminating the need for $\v_t^\star$. Unlike the divergence-based methods, AM fuses residual and data fitting into a single objective. The price is that AM requires data at the temporal endpoints (for the boundary terms in \eqref{eq:AM}) and does not directly constrain $\rho_t^\theta$ at intermediate times---it learns $\calF^\theta$ such that its gradient flow has the right velocity, leaving the trajectory $\rho^\theta$ to be recovered post-hoc.

\subsection{Consistency by construction.} 
\label{subsec:const_related_exten}

An alternative to the residual-loss approach we take is to bake the continuity equation into the parametrization itself, so that any candidate $(\rho^\theta,\v^\theta)$ satisfies it by construction. Neural Conservation Laws \citet{richterpowell2022neural} parametrize divergence-free vector fields via differential forms, and \citet{hua2025simulation} managed simplify this construction. Their setting is closely related to ours: like stitching, it parametrizes the trajectory and the dynamics jointly, but it enforces the PDE via the architecture rather than via a residual loss, which trades off architectural flexibility against exact constraint satisfaction.

\subsection{Neural SDEs, Schr\"odinger bridges, and flow matching}
\label{subsec:nueral_sde_related_exten}

Neural SDEs \citep{li2020scalable,kidger2021efficient,kidger2021neural} are a popular way to learn dynamical systems: they parameterize the drift and diffusion terms of an SDE, trained by optimization of a variational objective. Contrary to our setting, however, neural SDEs are typically formulated and trained with respect to trajectory/pathwise data. Applying them to population data is non-trivial, accomplished via a Sinkhorn divergence in \citet{koshizuka2022neural}.

Schr\"odinger bridges \citep{leonard2014survey,debortoli2021diffusion,shi2023diffusion,koshizuka2022neural,chen2023deep}, diffusion models \citep{song2020score} and flow matching \citep{lipman2022flow,tong2023improving,albergo2022building,wang2025joint} learn velocity fields between distributions without imposing a gradient-flow structure. They are thus solving a strictly weaker problem: they recover dynamics, but not an underlying energy. Similarly to flow matching, normalizing flows parametrize a velocity network $\v_t^\theta$ and let $\rho_t^\theta$ be the continuous normalizing flow \citep{chen2018neural,papamakarios2021normalizing} obtained by transporting $\rho_0$ along $\v_t^\theta$. This provides exact normalized density evaluation, and permits learning the model parameters by maximizing the likelihood (i.e., minimizing the KL divergence) of the data at observed times under the model $\rho_t^\theta$. 

\subsection{Other methods}\label{subsec:other-methods}
We also mention several additional related methods from the literature which do not fall into the above categories. We leave detailed comparison of our method to those in a future work.

\citet{guan2024identifying,guan2025gradient} study identifiability of the drift and diffusion of an SDE from temporal marginals, complementary to our setting where the drift is constrained to be the gradient of a learned functional. \cite{carrillo2025} considers recovery of interaction kernel from gridded density data using regularized basis pursuit. In \cite{wei2026}, the authors derive a self-test loss based on the weak form of the stochastic evolution equation for the empirical measure.

\section{Wavy valley: details}
\label{app:wavy_valley}

This appendix collects all hyperparameters and protocol details for the wavy-valley experiment summarised in \autoref{sec:exp_wavy_valley} and \autoref{fig:wavy_valley}.

\paragraph{Potential and SDE.}
The wavy valley is the 2D landscape
\[
  V(x_1, x_2) \;=\; K\bigl(x_2 - \sin(\pi x_1 / 2)\bigr)^2 \;-\; \tau\, x_1,
  \qquad K = 0.6,\ \tau = 0.3,
\]
whose minimum-energy curve is the sinusoid $x_2 = \sin(\pi x_1 / 2)$. The first term confines particles to the valley floor; the second tilts the floor downward in $+x_1$. We simulate the SDE $\mathrm{d}x = -\nabla V(x)\,\mathrm{d}t + \sqrt{2\beta}\,\mathrm{d}W$ with $\beta = 0.00625$ from a tight Gaussian initial distribution centred at $(-3, 1)$ with std $0.1$, using Euler--Maruyama with at least $10$ substeps per integration interval.

\paragraph{Snapshot protocol.}
We observe the population at the irregular times $t \in \{0, 5, 10, 20, 30\}$. Each snapshot draws $N = 20$ particles from the same coupled simulation pool (matched particle identities across times), yielding $T = 5$ paired snapshots and $20$ training points per snapshot. The $t = 10 \to t = 20$ gap of width $10$ is intentional: a first-order JKO predictor cannot interpolate a curved trajectory across a step that long.

\paragraph{Stitching configuration.}
The stitching model uses $N=20$ particles with $K=50$ trajectory nodes spaced linearly between $t_0 = 0$ and $t_T = 30$. The KDE bandwidth is initialised at $0.5$ and trained; the entropy coefficient is initialised at $0.05$ and trained (matching lightspeed's setting); particle mixture weights are trainable softmax variables. The velocity residual uses an $\alpha = 0.5$ convex combination of centers quadrature and Nadaraya--Watson Monte Carlo (\autoref{app:stitching_derivation}), with one MC sample per particle per snapshot. The $V^\theta$ network is a $(64, 64)$ MLP. Training: Adam at $\text{lr} = 5 \cdot 10^{-3}$ for $10K$ iterations.

\paragraph{Lightspeed configuration.}
We use the published JKOnet$^\star_V$ recipe \citep{terpin2024learning} with one modification: the entropy coefficient is trainable rather than frozen at zero, matching stitching for fairness. Other hyperparameters: $(64, 64)$ hidden, $\text{lr} = 10^{-3}$, $10K$ iterations, OT-Hungarian coupling between consecutive snapshot pairs.

\paragraph{Metrics.}
We report (i) per-snapshot $W_2$ between trained-model samples and held-out particles; (ii) coefficient of determination $R^2(V)$ between learned and ground-truth potentials, evaluated on the data support (the values in \autoref{fig:wavy_valley}); (iii) pattern $R^2$ between gradient fields, the scale-invariant variant.


\section{Synthetic potential recovery: details}
\label{app:synthetic_full}

\paragraph{Metric definitions.}
We use the three metrics of \citet[Eqs.~17--20]{persiianov2025ijkonet}, all lower-is-better. Let $\rho^\theta_{t_{k+1}}$ denote the model's predicted marginal at observation time $t_{k+1}$ and $q_{t_k}$ the observed marginal at $t_k$.
\begin{itemize}\itemsep0.2em
\item \textbf{One-step EMD}: the Earth Mover's (1-Wasserstein) distance between predicted and observed marginal, averaged over consecutive transitions,
\[
\mathrm{EMD} \;:=\; \frac{1}{T-1}\sum_{k=0}^{T-2} W_1\bigl(\rho^\theta_{t_{k+1}},\, q_{t_{k+1}}\bigr).
\]
\item $L^2$\textbf{-UVP} (gradient-level error): the residual squared $L^2(q_{t_k})$ error in the learned gradient $\nabla V^\theta$ as a fraction of the variance of the true gradient,
\[
L^2\text{-}\mathrm{UVP} \;:=\; \frac{\mathbb{E}_{\x\sim q_{t_k}}\bigl\|\nabla V^\theta(\x) - \nabla V(\x)\bigr\|^2}{\mathrm{Var}_{\x\sim q_{t_k}}\bigl[\nabla V(\x)\bigr]} \;\times\; 100\%.
\]
$L^2$-UVP measures recovery of the gradient field independently of additive constants in $V$, which is the right quantity for a Wasserstein gradient flow (the dynamics is invariant to such constants).
\item $\mathrm{Bd}^2_{W_2}$\textbf{-UVP} (Bures--Wasserstein UVP): the squared Bures--Wasserstein distance between Gaussian approximations of the predicted and observed marginals, normalized by the trace of the observed covariance. With $\mu^\theta, \Sigma^\theta$ the mean and covariance of $\rho^\theta_{t_{k+1}}$ and $\mu, \Sigma$ those of $q_{t_{k+1}}$,
\[
\mathrm{Bd}^2_{W_2}\text{-}\mathrm{UVP} \;:=\; \frac{\|\mu^\theta - \mu\|^2 \;+\; \mathrm{tr}\!\bigl(\Sigma^\theta + \Sigma - 2(\Sigma^{1/2}\Sigma^\theta\,\Sigma^{1/2})^{1/2}\bigr)}{\mathrm{tr}(\Sigma)} \;\times\; 100\%.
\]
This captures how well the predicted first two moments match the observed ones.
\end{itemize}

\paragraph{Pattern and raw $R^2$ for $V$.}
The galleries (\autoref{fig:synthetic_compact}, \autoref{fig:synthetic_gallery_paired}, \autoref{fig:synthetic_gallery_unpaired}) report two coefficients of determination between the learned and ground-truth potentials, both evaluated on a uniform grid covering the data support. $R^2_{\mathrm{raw}}$ is the standard coefficient of determination,
\[
R^2_{\mathrm{raw}}(V^\theta, V) \;:=\; 1 - \frac{\sum_i \bigl(V^\theta(\x_i) - V(\x_i)\bigr)^2}{\sum_i \bigl(V(\x_i) - \bar V\bigr)^2},
\]
where $\bar V$ is the mean of $V$ over the grid. $R^2_{\mathrm{raw}}$ is sensitive to additive and multiplicative shifts of $V^\theta$. $R^2_{\mathrm{pattern}}$ replaces the residual-sum-of-squares ratio with the squared Pearson correlation,
\[
R^2_{\mathrm{pattern}}(V^\theta, V) \;:=\; \mathrm{corr}\bigl(V^\theta, V\bigr)^2,
\]
again evaluated on the same grid; this is scale- and shift-invariant. Because Wasserstein gradient flow is invariant to additive constants in $V$ and identifiable only up to a scale by $\beta$ \citep[App.~A]{persiianov2025ijkonet}, $R^2_{\mathrm{pattern}}$ is the natural similarity score for the recovered potential, while $R^2_{\mathrm{raw}}$ additionally penalises any residual scale or offset.

\paragraph{Stitching configuration.}
\begin{itemize}\itemsep0pt
\item \textbf{Trajectory:} $N=1{,}000$ particles per run, length $K=50$, identity-coupled at initialization.
\item \textbf{Functional:} $\calF^\theta[\rho_t^\theta] = c_V\,\mathbb{E}_{\rho_t^\theta}[V^\theta(\x)]$ with $V^\theta$ a $(64,64)$ MLP and $c_V > 0$ a softplus-parametrised scalar (entropy and interaction terms disabled --- the \citet{terpin2024learning} benchmark is deterministic gradient flow of a single potential).
\item \textbf{Density model:} per-dimension Gaussian KDE bandwidth set by Silverman's rule, frozen uniform mixture weights $w_k = 1/N$.
\item \textbf{Loss:} velocity residual evaluated at particle centers ($\alpha = 0$, no KDE--MC perturbation), trapezoidal time scheme, plus a KDE-KL data divergence at the five observed snapshots.
\item \textbf{Optimizer:} full-batch Adam at learning rate $5\cdot 10^{-3}$ with cosine decay, $2{,}000$ steps.
\item \textbf{Configuration is identical across all $30$ runs} ($15$ potentials $\times$ $\{$paired, unpaired$\}$); each run takes a few minutes on a single CPU.
\end{itemize}

\paragraph{Baseline configuration.} JKOnet$^\star_V$ and iJKOnet$_V$ are run directly from the upstream codebases of \citet{terpin2024learning} and \citet{persiianov2025ijkonet} respectively, both at default hyperparameters: a $(64,64)$ MLP $V^\theta$ matching stitching, no entropy or interaction terms, single seed, on the same train/test splits as stitching. JKOnet$^\star_V$ uses Hungarian-OT couplings between consecutive snapshots; iJKOnet$_V$ uses its inverse-JKO solver with $K=5$. We train JKOnet$^\star_V$ for $100$ epochs and iJKOnet$_V$ for $2{,}000$ epochs (the upstream defaults). Metrics are pulled from each method's own evaluation pipeline (parsed from upstream stdout for iJKOnet$_V$, saved-params $+$ a forward SDE rollout in our pipeline for JKOnet$^\star_V$); iJKOnet$_V$'s CLI does not expose $L^2$-UVP, so those cells are marked `--' in \autoref{tab:synthetic2d_abs}. Because of compute, the baseline runs are restricted to the $6$ potentials \citet[Tab.~3]{persiianov2025ijkonet} flag as most paired$\to$unpaired sensitive; stitching's appendix numbers below cover all $15$.

\paragraph{Per-potential numbers.}
\autoref{tab:synthetic2d} reports stitching's three metrics on each of the $15$ landscapes of \citet{terpin2024learning}, in both regimes. The main-text \autoref{tab:synthetic2d_abs} compares stitching against the JKO baselines on the $6$ sensitive potentials. The consistency claim of the main text is visible row-by-row in \autoref{tab:synthetic2d}: paired and unpaired columns differ by less than $\sim$2$\times$ on every metric and every potential except the four most delicately structured (\texttt{sphere}, \texttt{bohachevsky}, \texttt{rotational}, and to a lesser extent \texttt{relu}), where stitching's $L^2$-UVP and $\mathrm{Bd}^2_{W_2}$-UVP increase but never collapse.

\begin{table}[tb]
\centering
\caption{Stitching's per-potential metrics on the $15$ two-dimensional landscapes of \citet{terpin2024learning}, in the \emph{paired} (correlated trajectories, original protocol) and \emph{unpaired} (independent snapshots, \citet{persiianov2025ijkonet} protocol) regimes. Same configuration across all $30$ runs ($1{,}000$ particles, $2{,}000$ steps, single seed). Metrics follow \citet{persiianov2025ijkonet}: one-step EMD, $L^2$-UVP (gradient-level error), and $\mathrm{Bd}^2_{W_2}$-UVP (Bures--Wasserstein UVP); $L^2$-UVP and $\mathrm{Bd}^2_{W_2}$-UVP in percent. Definitions in the metric paragraph above. Lower is better.}
\label{tab:synthetic2d}
\small
\setlength{\tabcolsep}{4pt}
\begin{tabular}{llrrrrrr}
\toprule
 & & \multicolumn{3}{c}{paired (correlated)} & \multicolumn{3}{c}{unpaired (independent)} \\
\cmidrule(lr){3-5} \cmidrule(lr){6-8}
\# & potential & EMD\,$\downarrow$ & $L^2$-UVP\,$\downarrow$ & $\mathrm{Bd}^2_{W_2}$-UVP\,$\downarrow$ & EMD\,$\downarrow$ & $L^2$-UVP\,$\downarrow$ & $\mathrm{Bd}^2_{W_2}$-UVP\,$\downarrow$ \\
\midrule
1  & flowers           & 0.30 & 0.00 & 0.18  & 0.31 & 0.01 & 0.24 \\
2  & styblinski\_tang  & 0.26 & 0.02 & 0.15  & 0.29 & 0.04 & 0.29 \\
3  & holder\_table     & 0.29 & 0.06 & 0.18  & 0.30 & 0.06 & 0.24 \\
4  & zigzag\_ridge     & 0.29 & 0.03 & 0.17  & 0.30 & 0.04 & 0.29 \\
5  & oakley\_ohagan    & 0.24 & 0.01 & 0.20  & 0.25 & 0.02 & 0.28 \\
6  & watershed         & 0.30 & 0.00 & 0.19  & 0.30 & 0.01 & 0.24 \\
7  & ishigami          & 0.30 & 0.01 & 0.17  & 0.30 & 0.01 & 0.25 \\
8  & friedman          & 0.29 & 0.06 & 0.20  & 0.31 & 0.06 & 0.27 \\
9  & sphere            & 0.62 & 0.86 & 1.45  & 0.64 & 0.91 & 1.24 \\
10 & bohachevsky       & 0.39 & 7.39 & 13.99 & 0.38 & 7.46 & 12.93 \\
11 & wavy\_plateau     & 0.27 & 0.02 & 0.51  & 0.29 & 0.04 & 0.54 \\
12 & double\_exp       & 0.34 & 0.02 & 0.22  & 0.35 & 0.06 & 0.29 \\
13 & relu              & 0.35 & 0.04 & 0.18  & 0.38 & 0.10 & 0.33 \\
14 & rotational        & 0.41 & 0.82 & 1.60  & 0.45 & 0.82 & 1.80 \\
15 & flat              & 0.30 & 0.00 & 0.19  & 0.31 & 0.01 & 0.24 \\
\bottomrule
\end{tabular}
\end{table}

\begin{figure}[h]
\centering
\includegraphics[width=\linewidth]{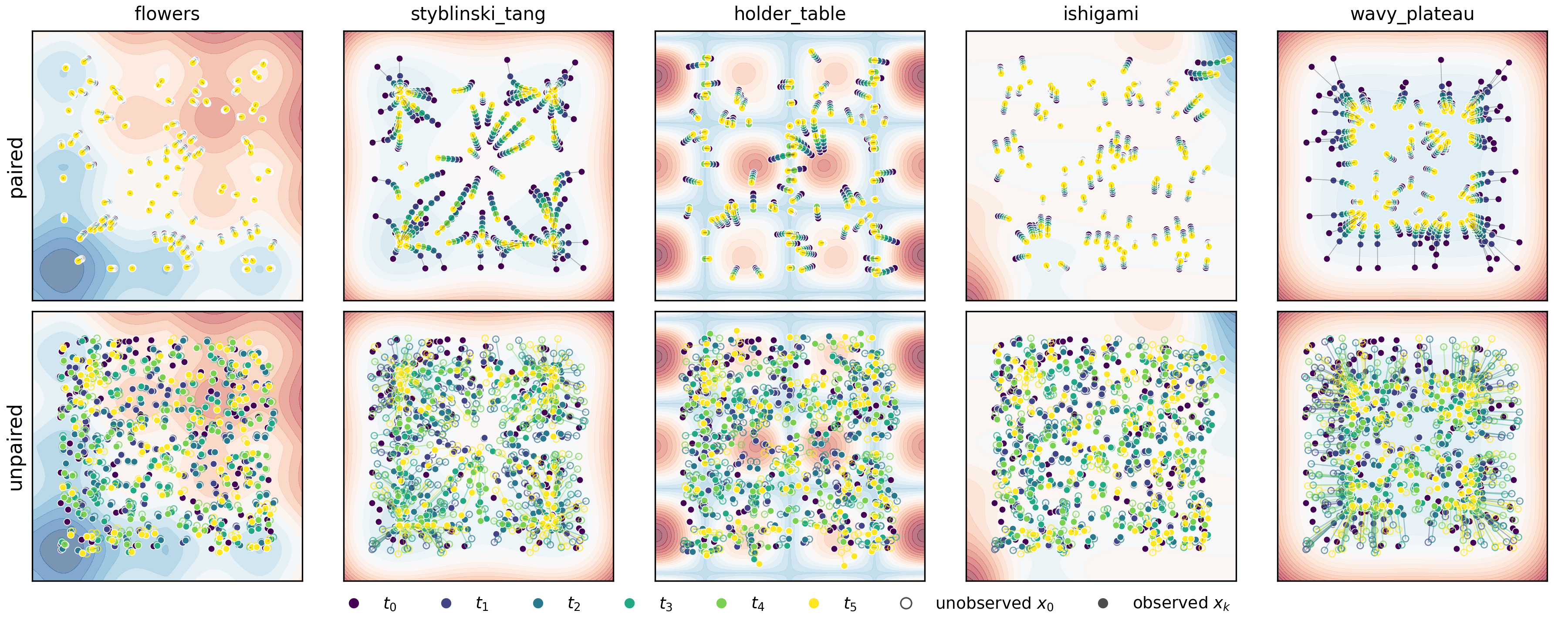}
\caption{\textbf{Paired vs.\ unpaired snapshot evolution on the synthetic  potentials following \citet{persiianov2025ijkonet}.} Top: In `paired' setting we observe same particles driven by a potential gradient over time. Bottom: In `unpaired` trajectory structure is lost, as if the observations remove the corresponding particles from the system. Both cases have the same number of observations.}
\label{fig:flowers_paired_vs_unpaired}
\end{figure}

\begin{figure}[tb]
\centering
\includegraphics[width=\linewidth]{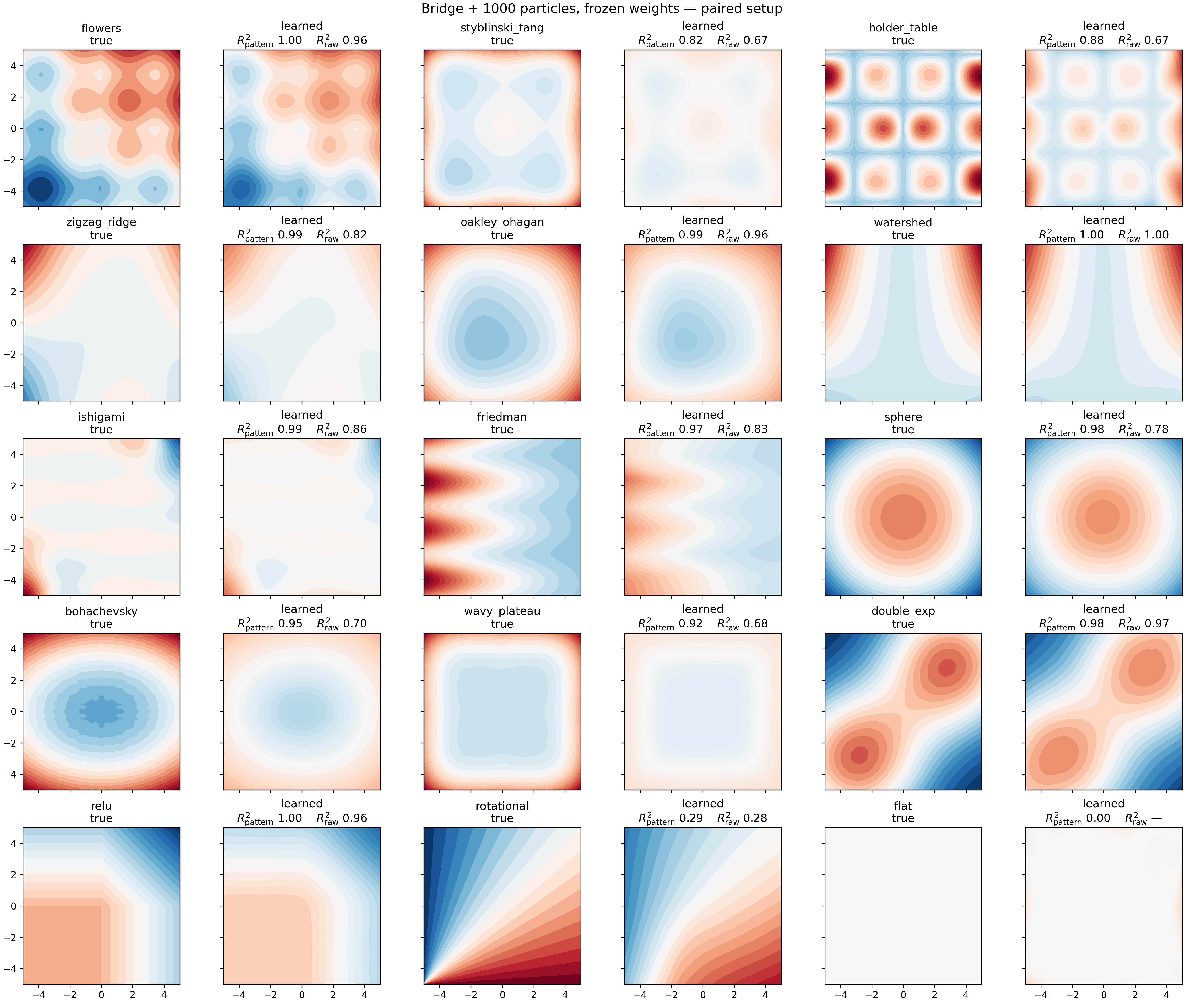}
\caption{\textbf{Stitching recovers the level-set geometry of $V$ on $11$ of $14$ informative landscapes.} Synthetic potential recovery, \emph{paired} regime: true $V$ (left) vs learned $V^\theta$ (right) for each potential, mean-centered with shared per-potential color scale; titles report scale-invariant $R^2_{\mathrm{pattern}}$ and on-support $R^2_{\mathrm{raw}}$.}
\label{fig:synthetic_gallery_paired}
\end{figure}

\begin{figure}[tb]
\centering
\includegraphics[width=\linewidth]{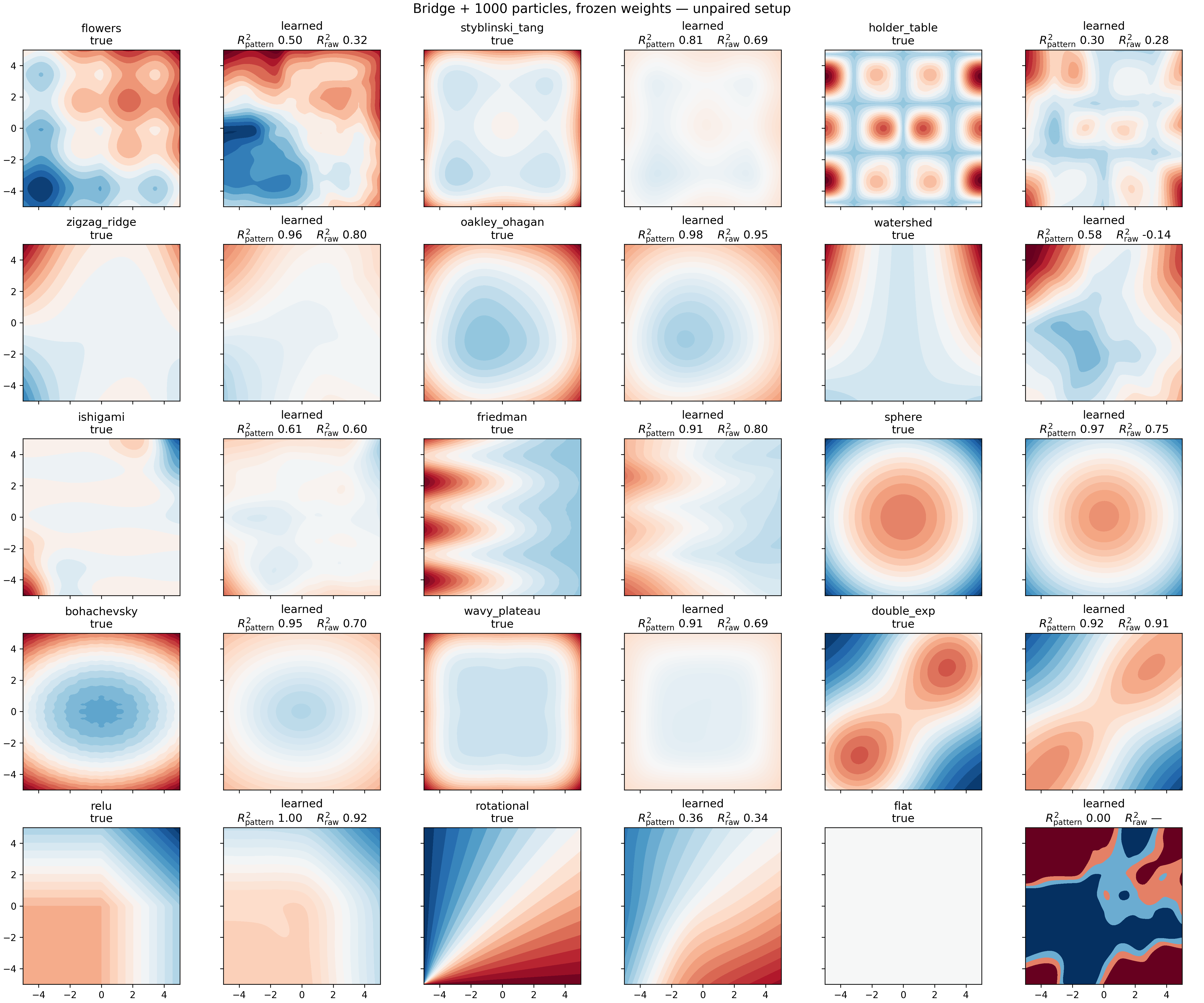}
\caption{\textbf{Stitching remains robust to independently-sampled snapshots; detectable degradation only on the four most delicately structured potentials.} Synthetic potential recovery, \emph{unpaired} regime; same conventions as \autoref{fig:synthetic_gallery_paired}.}
\label{fig:synthetic_gallery_unpaired}
\end{figure}

\FloatBarrier

\section{Single-cell trajectory inference: details}
\label{app:rna_details}

\paragraph{Dataset and preprocessing.}
The embryoid body (EB) dataset of \citet{moon2019visualizing} comprises $\sim 17{,}000$ cells observed across five 3-day windows of human embryonic stem cell differentiation. We use the PCA-reduced version distributed by \citet{tong2020trajectorynet}: each cell is represented by its first $5$ principal components, and time labels are scaled to $[0,1]$. We use a 70/30 particle-level train/test split, deterministic per seed.

\paragraph{Stitching architecture.}
\begin{itemize}\itemsep0pt
\item \textbf{Static} $V^\theta(\x)$: MLP, input dim $5$, hidden $(64,64)$, SiLU activations with residual skips, scalar output.
\item \textbf{Time-varying} $V^\theta(\x,t)$: same MLP but input dim $6$, with $t$ concatenated to $\x$. Matches the design of iJKOnet$_{t,V}$ in \citet[Sec.~5]{persiianov2025ijkonet} and JKOnet$^*_{t,V}$ in \citet{terpin2024learning}.
\item \textbf{Trajectory:} $K=50$ learnable snapshots between $t_{\min}$ and $t_{\max}$, $N=100$ particles per snapshot, OT-coupled at initialization (Hungarian assignment + linear interpolation between consecutive observed marginals).
\item \textbf{Density model:} per-dimension Gaussian KDE bandwidth (learnable, softplus-parametrized) and learnable mixture weights (softmax-normalized).
\item \textbf{Functional:} $\calF^\theta[\rho_t^\theta] = c_V\,\mathbb{E}_{\rho_t^\theta}[V^\theta(\x)] + c_H\,\mathbb{E}_{\rho_t^\theta}[\log\rho_t^\theta]$ with $c_V,c_H>0$ softplus-parametrized scalars.
\end{itemize}

\paragraph{Training.}
\begin{itemize}\itemsep0pt
\item \textbf{Loss:} 
 $\calL = w_{\mathrm{kl}}\,\calD_{\mathrm{KL}}(\rho^\theta_{t_{\mathrm{obs}}}, q_{t_{\mathrm{obs}}}) 
+ w_{\mathrm{vel}}\,\calR_{\mathrm{vel}}[\calF^\theta,(\v^{\theta},\rho^\theta)]$ with the trapezoidal-scheme velocity residual (\autoref{eq:stitch_ideal_global_approx}), $\alpha=0$ (no KDE-MC perturbation), kinetic factor $1$.
\item \textbf{Optimizer:} Adam, batch size $256$, learning rate $5 \cdot 10^{-3}$ with cosine decay to $5\%$ of the initial value.
\item \textbf{Epochs:} $10{,}000$.
\item \textbf{Seeds:} $0,\ldots,4$ for the leave-two-out comparison; a single seed for the full-data benchmark and \autoref{fig:rna_panels}.
\item \textbf{Wall time:} $\sim 2$ minutes per run on a CPU laptop (M-series).
\end{itemize}

\paragraph{Evaluation protocols.}
\begin{itemize}\itemsep0pt
\item \textbf{Full data} (\autoref{tab:rna_full}): train on all five marginals; report $W_1$ between the stitching KDE marginal at each observed $t$ and the held-out test cells at that $t$. Forward-rollout from the learned trajectory; no JKO chain.
\item \textbf{Leave-two-out} (\autoref{tab:rna_interp}): restrict training to $t\in\{0,2,4\}$; evaluate at held-out $t\in\{1,3\}$ via KDE marginals of the learned particle cloud at the target time. No retraining or architectural change; the model is continuous-time by construction. Metric $W_2$ to match \citet[Table~1]{persiianov2025ijkonet}.
\item All distances computed with the POT library: EMD via \texttt{emd2(Euclidean)} and $W_2$ via $\sqrt{\texttt{emd2}(\text{sqEuclidean})}$.
\end{itemize}

\paragraph{Leave-two-out results.}
\autoref{tab:rna_interp} compares stitching against the full set of baselines reported in \citet[Table~1]{persiianov2025ijkonet}, all using the same $W_2$ metric. Stitching with a static $V$ already outperforms every published baseline (mean $W_2$ $0.92$); the time-varying $V^\theta(\x,t)$ variant further reduces the mean to $\mathbf{0.88}$, with the best published competitor (iJKOnet$_{t,V}$) at $0.92$.

\begin{table}[tb]
\centering
\caption{Leave-two-out temporal interpolation on the EB single-cell dataset (5\,D), $W_2$ ($\downarrow$). Models trained on $t\in\{0,2,4\}$; evaluated at held-out $t=1$ and $t=3$. Baseline results from \citet[Table~1]{persiianov2025ijkonet}; our results averaged over 5 seeds.}
\label{tab:rna_interp}
\adjustbox{max width=\linewidth}{%
\begin{tabular}{lllll}
\toprule
Method      & $t=1$ & $t=3$ & Mean & Citation \\
\midrule
TrajectoryNet  & $2.03$\stderr{0.04} & $1.93$\stderr{0.08} & $1.98$ & \citet{tong2020trajectorynet} \\
Vanilla-SB     & $1.49$\stderr{0.06} & $1.55$\stderr{0.03} & $1.52$ & \citet{vargas2021solving} \\
MMSB           & $1.27$\stderr{0.03} & $1.57$\stderr{0.05} & $1.42$ & \citet{shen2025mmsb} \\
DMSB           & $1.13$\stderr{0.08} & $1.45$\stderr{0.16} & $1.29$ & \citet{chen2023deep} \\
\midrule
\multicolumn{5}{l}{\textit{Static potential}} \\
JKOnet$^*_V$   & $1.15$\stderr{0.03} & $2.53$\stderr{0.01} & $1.84$ & \citet{terpin2024learning} \\
iJKOnet$_V$    & $1.08$\stderr{0.01} & $1.15$\stderr{0.00} & $1.12$ & \citet{persiianov2025ijkonet} \\
Stitching, $V^\theta(\x)$ & $\mathbf{0.85}$\stderr{0.06} & $\mathbf{0.99}$\stderr{0.03} & $\mathbf{0.92}$\stderr{0.03} & This paper \\
\midrule
\multicolumn{5}{l}{\textit{Time-varying potential}} \\
JKOnet$^*_{t,V}$    & $4.41$\stderr{1.50} & $2.77$\stderr{0.20} & $3.59$ & \citet{terpin2024learning} \\
iJKOnet$_{t,V}$     & $0.98$\stderr{0.04} & $\mathbf{0.85}$\stderr{0.02} & $0.92$ & \citet{persiianov2025ijkonet} \\
Stitching, $V^\theta(\x,t)$ & $\mathbf{0.84}$\stderr{0.05} & $0.92$\stderr{0.03} & $\mathbf{0.88}$\stderr{0.04} & This paper \\
\bottomrule
\end{tabular}%
}
\end{table}

\section{Interaction dynamics: details}
\label{app:cis}

This appendix collects the full setup, model, training, and metric definitions deferred from \autoref{sec:exp_interaction}.

\paragraph{Dataset.}
The interaction dynamics data is a single-trajectory simulation of a 2D SDE \eqref{eq:mckean-vlasov-sde}
\[
  \mathrm{d}X_t^i = -\nabla_{\x} V(X_t^i)\,\mathrm{d}t - \frac{1}{N}\!\sum_{j\neq i}\nabla_{x} W(X_t^i{-}X_t^j)\,\mathrm{d}t + \sigma\,\mathrm{d}B_t^i, \quad, i=1,\dots, N,
\]
with $V(x){=}\alpha(\|x\|^2{-}\beta)^2$ ($\alpha{=}0.1, \beta{=}4$, minimum on the ring $\|x\|{=}2$), $W(r){=}\eta\,e^{-r^2}$ ($\eta{=}{-}2.2$), $\sigma{=}0.045$, $N{=}310$ particles, integrated with Euler--Maruyama at $\mathrm{d}t{=}0.05$ over $[0,200]$ ($T{=}201$ snapshots) from a Gaussian initial condition. We use a $50/50$ particle-level train/test split applied uniformly across all $T$ snapshots.

\paragraph{Energy parametrization.}
We learn $\calF^\theta[\rho_t^\theta]{=}c_V\,\E_{\rho_t^\theta}[V^\theta(\x)]+c_W\,\E_{\rho_t^\theta\otimes\rho_t^\theta}[W^\theta(\|\x{-}\x'\|)]+c_H\,\E_{\rho_t^\theta}[\log\rho_t^\theta]$ with $V^\theta:\R^2{\to}\R$ and the radial kernel $W^\theta:\R_{\geq 0}{\to}\R$, both parametrized as two-layer $(64,64)$ MLPs (zero-init last layer), and softplus-parametrized positive coefficients $c_V,c_W,c_H>0$. The radial parametrization $W^\theta(\|\x{-}\x'\|)$ enforces translational and rotational symmetry of the kernel without restricting its functional form.

\paragraph{Stitching model.}
The trajectory $\rho^\theta$ is represented by $100$ learnable particles at $K{=}50$ snapshots between $t_0{=}0$ and $t_{200}$, OT-coupled to the data at initialization, with a Gaussian KDE marginal. The KDE bandwidth is initialized by Silverman's rule and learned. Mixture weights are held uniform ($w_k{=}1/N$).

\paragraph{Training.}
Full-batch Adam at $\text{lr}{=}5{\cdot}10^{-3}$ for $1{,}000$ epochs, optimizing the centers-quadrature velocity residual~\eqref{eq:velocity_res} (trapezoidal scheme, $\alpha{=}0$ in the convex combination) plus a KDE KL data-fit term, identical to the synthetic recovery setup of \autoref{sec:exp_synthetic}. CPU wall time on a 2023 MacBook is ${\approx}4$ minutes per run.

\paragraph{Baseline.}
We compare against JKOnet$^\star$ \citep{terpin2024learning} on the same train/test split. 

\paragraph{Pattern $R^2$ for $V$ and $W$.}
WGF inference from snapshots is identifiable only up to a joint $(V,W,\sigma)$-scale ambiguity \citep[App.~A]{persiianov2025ijkonet}: only $\nabla V/\sigma$ and $\nabla W/\sigma$ are determined by the data, not the absolute scale of $V,W$. We therefore report the scale- and shift-invariant pattern $R^2$, equal to the squared Pearson correlation between learned and true fields on the relevant support: a uniform grid covering the data support for $V$, and the band $r\in[r_{5\%},r_{95\%}]$ of observed pair distances at $t{=}100$ for $W$. This is the same metric used in \autoref{tab:synthetic2d}.

\section{Beyond gradient flows: non-conservative interactions}
\label{app:nongradient}
The residual framework of \autoref{sec:residuals} only requires a target velocity field; it does not require that target to be the Wasserstein gradient of any functional. This is relevant in  applications, as many interacting particle systems of practical interest have non-conservative pairwise interactions, including chiral active matter \citep{liebchen2022} and the broader class of non-reciprocal collective systems \citep{fruchart2021}, none of which arise as Wasserstein gradient flows. Concretely, for any model curve $\rho^\theta$ with Eulerian velocity $\v^\theta$ and any model velocity field $\u^\theta$,
\begin{equation}
\label{eq:generic_velocity_residual}
\calR_{\textnormal{vel}}[\u^\theta, (\rho^\theta, \v^\theta)]
\;:=\;\int_0^T\!\!\int_{\R^D}\bigl\|\v_t^\theta(\x)-\u_t^\theta(\x)\bigr\|^2\,\rho_t^{\theta}(\x)\,\mathrm{d}\x\,\mathrm{d}t
\end{equation}
is a well-defined nonnegative loss whose minimum is the curve whose velocity matches $\u^\theta$. Choosing $\u_t^\theta=-\nabla_\x\dFdrtt$ recovers \eqref{eq:velocity_res} and the WGF stitching of \autoref{sec:vel_res}; any other choice of $\u^\theta$ instantiates stitching for a different class of dynamics. In particular, the KDE parametrization \eqref{eq:particle_param}, the centers approximation \eqref{eq:kde_approx}, the time discretization leading to \eqref{eq:stitch_ideal_global_approx}, and the data divergence $\calD(\rho_t^\theta,q_t)$ are all independent of any gradient structure of $\u^\theta$.

Here we illustrate the extension on a non-conservative interacting particle system. We replace the symmetric scalar interaction $\inter$ in \eqref{eq:functional} by a vector-valued pairwise kernel $\mathbf{K}:\R^D\to\R^D$. The corresponding SDE  reads
\begin{equation}
\label{eq:mckean-vlasov-vector-kernel}
\mathrm{d}X_t^i \;=\; -\nabla \pot(X_t^i)\,\mathrm{d}t \;+\; \frac{1}{N}\sum_{j\neq i}\mathbf{K}(X_t^i-X_t^j)\,\mathrm{d}t \;+\; \sigma\,\mathrm{d}B_t^i,
\end{equation}
in direct analogy with the conservative SDE of \autoref{app:cis}. When $\mathbf{K} = -\nabla \inter$ for a scalar $\inter$, \eqref{eq:mckean-vlasov-vector-kernel} is a Wasserstein gradient flow of \eqref{eq:functional}; when $\mathbf{K}$ has nonzero curl, no scalar $\inter$ satisfies $\mathbf{K}=-\nabla \inter$, and \eqref{eq:mckean-vlasov-vector-kernel} cannot be written as a WGF. A canonical non-conservative case in two dimensions is the \emph{chiral kernel}
\begin{equation}
\label{eq:chiral_kernel_def}
\mathbf{K}(\x'-\x) \;=\; \alpha\,\bigl(-\nabla \inter(\x'-\x)\bigr) \;+\; \omega\,R_{\pi/2}\bigl(-\nabla \inter(\x'-\x)\bigr),
\qquad R_{\pi/2}\!=\!\begin{pmatrix}0 & -1\\ 1 & \phantom{-}0\end{pmatrix},
\end{equation}
where $\alpha,\omega\in\R$ are scalars and $\inter:\R^2\to\R$ is a radial scalar potential. The first summand is a standard conservative attraction/repulsion; the second is a 90$^\circ$ rotation of the same gradient, inducing a circulating component. Whenever $\omega\neq 0$ the kernel has nonzero curl.

\paragraph{Parametrization.}
We parametrize the learned kernel as an unrestricted vector-valued MLP
\begin{equation}
\label{eq:chiral_kernel_param}
\mathbf{K}^\theta:\R^2\to\R^2,
\end{equation}
that ingests the displacement $\mathbf{r}{=}\x'{-}\x$ directly and outputs a 2-vector, with no structural prior --- in particular, no built-in decomposition into radial and chiral parts. We omit both the confinement $\pot^\theta$ and the entropy term in this experiment, so the target velocity field reduces to the convolution of $\mathbf{K}^\theta$ against the curve,
\begin{equation}
\label{eq:chiral_target_velocity}
\u_t^\theta(\x) \;=\; (\mathbf{K}^\theta\ast\rho_t^\theta)(\x), \qquad (\mathbf{K}^\theta\ast\rho_t^\theta)(\x)=\int_{\R^D}\mathbf{K}^\theta(\x'-\x)\,\rho_t^\theta(\x')\,\mathrm{d}\x'.
\end{equation}
The stitching loss \eqref{eq:stitch_ideal_global_approx} is structurally unchanged: substitute $-\u_t^\theta$ for $\nabla_\x\dFdrtt$ and approximate $\mathbf{K}^\theta\ast\rho_t^\theta$ at the particle centers analogously to \eqref{eq:approx_inter_kde},
\begin{equation}
\label{eq:approx_chiral_kernel}
(\mathbf{K}^\theta\ast\rho_t^\theta)(\x_{t_j,k}^\theta)\;\approx\;\sum_{l=1}^{N}\wt_l\,\mathbf{K}^\theta(\x_{t_j,l}^\theta-\x_{t_j,k}^\theta).
\end{equation}

\paragraph{Setup.}
We simulate \eqref{eq:mckean-vlasov-vector-kernel} on $[0,T]$ with $T=800$, $\pot\equiv 0$, $\sigma=0$, $N=500$ particles, and the chiral kernel \eqref{eq:chiral_kernel_def}, where $\inter$ is a generalised Morse potential \citep{dorsogna2006}
\[
\inter(r) \;=\; C_r\,e^{-r/\ell_r} - C_a\,e^{-r/\ell_a},
\qquad (C_r,\ell_r,C_a,\ell_a)=(1.0,\,0.5,\,0.375,\,1.5),
\]
with chirality $\omega=1.5$ and radial scale $\alpha=0.2$. The initial condition is a mixture of two horizontal Gaussian blobs centred at $(0,\pm 2)$ with covariance $\mathrm{diag}(1.5^2,0.2^2)$; with $\omega\neq 0$ the two lumps orbit each other while the kernel's radial component would otherwise relax them to the rotationally-symmetric Morse equilibrium. Particles are saved at $\Delta t = 0.5$ via Euler--Maruyama with internal step $0.05$, yielding $1601$ marginal snapshots.

\paragraph{Stitching configuration.}
$\mathbf{K}^\theta$ is a $(64,64)$ MLP with SiLU activations and a 2-D output; the final-layer weights are rescaled by $10^{-2}$ at initialization so that $\mathbf{K}^\theta\!\approx\!0$ at start while gradients still propagate through every layer. We use $200$  particles. The confinement and entropy terms are not part of the model. The learning setup otherwise follows \autoref{app:cis}.

\paragraph{Result.}
\autoref{fig:chiral_snapshots} in the main text shows the stitching marginals tracking the rotation of the two-lump pattern across snapshots, with the learned curve $\rho^\theta$ following the data $q$ at the displayed times.


\end{document}